\pgfplotsset{compat=newest}
\newcommand{\tr}{\mathop{\rm tr}}
\newcommand{\E}{\mathop{\mathbb{E}}}
\newcommand{\diag}{\mathop{\rm diag}}
\newtheorem{theorem}{{\bf Theorem}}
\newtheorem*{theorem*}{{\bf Theorem}}
\newtheorem{lemma}{{\bf Lemma}}
\newtheorem{remark}{{\bf Remark}}
\newtheorem{assumption}{Assumption}
\begin{document}
%
\title{Risk Convergence of Centered Kernel Ridge Regression with Large Dimensional Data}
%
%
%

\author{Khalil Elkhalil,
Abla Kammoun, Xiangliang Zhang, Mohamed-Slim Alouini and Tareq Al-Naffouri
\thanks{The authors are with the Electrical Engineering Program, King Abdullah University of Science and Technology, Thuwal, Saudi Arabia; e-mails: \{khalil.elkhalil, abla.kammoun, xiangliang.zhang, tareq.alnaffouri, slim.alouini\}@kaust.edu.sa.}}

\maketitle

\begin{abstract}
\label{abstract}
This paper carries out a large dimensional analysis of a variation of kernel ridge regression that we call \emph{centered kernel ridge regression} (CKRR), also known in the literature as kernel ridge regression with offset. This modified technique is obtained by accounting for the bias in the regression problem resulting in the old kernel ridge regression but with \emph{centered} kernels. The analysis is carried out under the assumption that the data is drawn from a Gaussian distribution and heavily relies on tools from random matrix theory (RMT). Under the regime in which the data dimension and the training size grow infinitely large with fixed ratio and under some mild assumptions controlling the data statistics, we show that both the empirical and the prediction risks converge to a deterministic quantities that describe in closed form fashion the performance of CKRR in terms of the data statistics and dimensions. Inspired by this theoretical result, we subsequently build a consistent estimator of the prediction risk based on the training data which allows to optimally tune the design parameters. A key insight of the proposed analysis is the fact that asymptotically a large class of kernels achieve the same minimum prediction risk. This insight is validated with both synthetic and real data.
\end{abstract}

\begin{IEEEkeywords}
Kernel regression, centered kernels, random matrix theory.
\end{IEEEkeywords}

%
\IEEEpeerreviewmaketitle

\section{Introduction}
%
%
%
%
\label{introduction}
\IEEEPARstart{K}{ernel} ridge regression (KRR) is part of kernel-based machine learning methods that deploy a set of nonlinear functions to describe the real output of interest \cite{bishop, kernel_methods_Smola}. More precisely, the idea is to map the data into a high-dimensional space $\mathcal{H}$, a.k.a.  \emph{feature space}, which can even be of infinite dimension  resulting in a linear representation of the data with respect to the output. Then,  a linear regression problem is solved in $\mathcal{H}$ by controlling over-fitting with a regularization term. 
In fact, the most important advantage of kernel methods is the utilized \emph{kernel trick} or   \emph{kernel substitution} \cite{bishop}, which allows to directly work with kernels and avoid explicit use of feature vectors in $\mathcal{H}$. 
\par
Due to its popularity, a rich body of research has been conducted to analyze the performance of KRR. In \cite{alaoui_kernel}, a randomized version of KRR is studied with performance guarantees in terms of concentration bounds. The work in \cite{Rudi_NIPS2017} analyzes the random  features approximation in least squares kernel regression. More relevant results can be found in \cite{Giguere:2013} where upper bounds of the prediction risk have been derived in terms of the empirical quadratic risk
for general regression models. Similarly for KRR models,   an upper and lower bound on the expected risk have been provided in  \cite{Caponnetto2007} before being generalized to general regularization operators in \cite{Dicker2017KernelRV}.  Therefore, most of the results related to the performance analysis of KRR and related regression techniques are in the form of upper or lower bounds of the prediction risk. In this work, we study the problem from an asymptotic analysis perspective.
As we will demonstrate in the course of the paper, such an analysis brought about novel results that predict in an accurate fashion prediction risks metrics. 
Our focus is on a variation  of KRR called centered kernel ridge regression (CKRR) that is built upon the same principles of KRR with the additional requirement to minimize the \emph{bias} in the learning problem. This variation has been motivated by  Cortes et al. in \cite{cortes_centered_kernel} and \cite{centering1, centering2} where the benefits of centering kernels have been highlighted. 
The obtained regression technique can be seen as KRR with centered kernels. Moreover, in the high dimensional setting with certain normalizations, we show that kernel matrices behave as a rank one matrix, thus centering allows to neutralize this non-informative component and highlight higher order components that retain useful information of the data. 
\par 
To understand the behavior of CKRR, we conduct theoretical analysis in the large dimensional regime where both the data dimension $p$ and the training size $n$ tend to infinity with fixed ratio ($p/n \to \text{constant}$). As far as \emph{inner-product kernels} are concerned, with mild assumptions on the data statistics, we show using fundamental results from random matrix theory elaborated in \cite{karoui_kernel} and \cite{romain_clustering} that both the empirical and prediction risks approach a
deterministic quantity that relates in closed form fashion these performance measures to the data statistics and dimensions. This important finding allows to see how the model performance behaves as a function of the problem's parameters and as such tune the design parameters to minimize the prediction risk. Moreover, as an outcome of this result,  we show that it is possible to jointly optimize the regularization parameter along with the kernel function so that to achieve the possible minimum
prediction risk. In other words, the minimum prediction risk is always attainable for all kernels with a proper choice of the regualrization parameter. This implies that all kernels behave similarly to the linear kernel. We regard such a fact  as a consequence of the curse of dimensionality phenomenon which causes the CKRR to be asymptotically equivalent to \emph{centered linear ridge regression}. As an additional contribution of the present work, we  build a consistent estimator of the prediction risk based on the training samples, thereby paving the way towards  optimal setting of the regularization parameter. 

The rest of the paper is structured as follows. In section \ref{background KRR}, we give a brief background on kernel ridge regression and introduce its centered variation. In section \ref{main results}, we provide the main results of the paper related to the asymptotic analysis of CKRR as well as the construction of a consistent estimator of the prediction risk. Then, we provide some numerical examples in section \ref{experiments}. We finally make some concluding remarks in section \ref{conclusion}.
\subsection*{Notations: $\E \left[.\right]$ and $\text{var}\left[.\right]$ stand for the expectation and the variance of a random variable while $\to_{a.s.}$ and $\to_{prob.}$ respectively stand for the almost sure convergence and the convergence in probability. $\left \| .\right \|$ denotes the operator norm of a matrix and the $L_2$ norm for vectors, $\tr \left[.\right]$ stands for the trace operator. The notation $f = O\left(g\right)$ means that $\exists M$ bounded such that $f \leq M g$. We say that $f$ is $\mathcal{C}^p$ if the $p$th derivative of $f$ exists and is continous.}
\section{Background on kernel ridge regression} \label{background KRR}
Let $\{\left(\bm{x}_i, y_i\right)\}_{i=1}^n$ be a set of $n$ observations in $\mathcal{X} \times \mathcal{Y}$, where $\mathcal{X}$ denotes the input space and $\mathcal{Y}$ the output space. Our aim is to predict the output of new input points $\bm{x} \in \mathcal{X}$ with a reasonable accuracy. Assume that the output is generated using a function $f: \mathcal{X} \to \mathcal{Y}$, then the problem can be cast as a function approximation problem where the goal is to find an estimate of $f$ denoted by $\widehat{f}$ such that $\widehat{f}\left(\bm{x}\right)$ is close to the real output $f\left(\bm{x}\right)$. In this context, the kernel learning problem is formulated as follows
\begin{equation}
\label{kernel_optim_pb}
\min_{f \in \mathcal{H}} \frac{1}{2} \sum_{i=1}^{n} l\left(y_i, f\left(\bm{x}_i\right)\right) + \frac{\lambda}{2} \left \| f \right \|_{\mathcal{H}}^2,
\end{equation}
where $\mathcal{H}$ is a reproducing kernel Hilbert space (RKHS), $l: \mathcal{Y} \times \mathcal{Y} \to \mathbb{R}$ is a loss function and $\lambda > 0$ is a regularization parameter that permits to control overfitting. Denoting by $\phi: \mathcal{X} \to \mathcal{H}$ a feature map that maps the data points to the feature space $\mathcal{H}$, then we define $k: \mathcal{X} \times \mathcal{X} \to \mathbb{R}$ such that ${k}\left(\bm{x}, \bm{x}'\right) = \left \langle
\phi\left(\bm{x}\right), \phi\left(\bm{x}'\right) \right \rangle_{\mathcal{H}}$ for all $\bm{x}, \bm{x}' \in \mathcal{X}$ where ${k}$ is known as the positive definite kernel corresponding to the feature map $\phi$. With these definitions, the representer theorem \cite{representer1, representer2} shows that the minimizer of the problem in \eqref{kernel_optim_pb} writes as $f^{*} \left(\bm{x}\right) = \bm{\alpha}^T \phi\left(\bm{x}\right)$. 
Thus, we can reformulate \eqref{kernel_optim_pb} as follows
\begin{equation}
\label{kernel_optim_K}
\min_{\bm{\alpha} \in \mathcal{H}} \frac{1}{2} \sum_{i=1}^{n} l\left(y_i, \bm{\alpha}^T \phi\left(\bm{x}_i\right)\right) + \frac{\lambda}{2} \left \| \bm{\alpha} \right \|^2.
\end{equation}
When $l$ is the squared loss, the optimization problem in \eqref{kernel_optim_K} can be reformulated as 
\begin{align}
\min_{\bm{\alpha} \in \mathcal{H}} \frac{1}{2} \left \| \bm{y} - \mathbf{\Phi} \bm{\alpha} \right \|^2 + \frac{\lambda}{2} \left \| \bm{\alpha} \right \|^2,
\end{align}
where $\mathbf{\Phi} = \left[\phi\left(\bm{x}_1\right), \cdots, \phi\left(\bm{x}_n\right)\right]^T$. This yields the following solution
$
\bm{\alpha}^* = \left(\mathbf{\Phi}^T \mathbf{\Phi} + \lambda \mathbf{I} \right)^{-1} \mathbf{\Phi}^T \bm{y}
$, 
where $\bm{y} = \{y_i\}_{i=1}^n$. Then, the output estimate of any data point $\bm{s}$ is given by \cite{bishop}
\begin{equation}
\label{KRR_estimate}
\begin{split}
\widehat{f}\left(\bm{s}\right)	= \bm{\kappa}\left(\bm{s}\right)^T \left(\mathbf{K} + \lambda \mathbf{I}\right)^{-1} \bm{y},
\end{split}
\end{equation}
where $\bm{\kappa}\left(\bm{s}\right) = \{{k}\left(\bm{s}, \bm{x}_i\right)\}_{i=1}^n$ is the information vector and $\mathbf{K} = \mathbf{\Phi} \mathbf{\Phi}^T$ with entries $\mathbf{K}_{i,j} = {\bm \kappa}\left(\bm{x}_i, \bm{x}_j\right)$, $1 \leq i,j \leq n$. This is commonly known as the kernel trick which allows to highly simplify the problem which boils down to solving a $n$-dimensional problem.
Throughout this paper, we consider the following data model
\begin{align}
\label{data_model}
y_i = f\left(\bm{x}_i\right) + \sigma \epsilon_i, \quad i =1, \cdots, n.
\end{align}
where $f$ generates the actual output of the data and $\epsilon_i$ are i.i.d. standard normal random variables with $\sigma^2$ assumed to be known. 
We consider both the empirical (training) and the prediction (testing) risks respectively defined as \cite{vapnik_regression_vc}
\begin{align}
\label{risk_train}
& \mathcal{R}_{\rm train} \left(\widehat{f}\right)
=\frac{1}{n} \mathbb{E}_{\bm{\epsilon}} \|	\widehat{f}\left(\mathbf{X}\right) - 	f\left(\mathbf{X}\right)\|_2^2, \\  
\label{risk_test}
& \mathcal{R}_{\rm test}\left(\widehat{f}\right) =  \E_{\bm{s} \sim \mathcal{D},\bm{ \epsilon}}\left[\left(\widehat{f}\left(\bm{s}\right) - f\left(\bm{s}\right) \right)^2\right], 
\end{align}
where $\mathcal{D}$ is the data input distribution, $\bm{s}$ is taken independent of the training data $\mathbf{X}$ and $\bm{\epsilon} =\{\epsilon_i\}_{i=1}^n$. The above two equations respectively measure the goodness of fit relative to the training data and to new unseen data all in terms of the mean squared error (MSE). 
\subsection{Centered kernel ridge regression}
The concept of centered kernels  dates back to the work of Cortes \cite{cortes_centered_kernel} on learning kernels based on the notion of centered alignment. As we will show later, this notion of centering comes naturally to the picture when we account for the bias in the learning problem (also see the lecture notes by Jakkola \cite{Jakkola_CKRR}). More specifically, we modify the optimization problem in \eqref{kernel_optim_K} to account for the bias as follows
\begin{equation}
\label{kernel_optim_bias}
\min_{\alpha_0, \bm{\alpha} \in \mathcal{H}} \frac{1}{2} \sum_{i=1}^{n} l\left(y_i, \alpha_0 + \bm{\alpha}^T \phi\left(\bm{x}_i\right)\right) + \frac{\lambda}{2} \left \| \bm{\alpha} \right \|^2,	
\end{equation}
where clearly we do not penalize the \emph{offset} (or the \emph{bias}) $\alpha_0$ in the regularization term
. With $l$ being the squared loss, we immediately get 
$	\alpha_0^* = \bar{y} - \frac{1}{n}\bm{\alpha}^T\bm{\Phi}^T \mathbf{1} , \quad \overline{y} = \frac{1}{n} \bm{1}^T\bm{y}.$
Substituting $\alpha_0^*$ in \eqref{kernel_optim_bias}, we solve the centered optimization problem given by
\begin{equation}
\label{centered_problem}
\min_{\bm{\alpha} \in \mathcal{H}} \frac{1}{2}\left \| \mathbf{P} \left(\bm{y} - \mathbf{\Phi}\bm{\alpha}\right) \right \|_2^2 + \frac{\lambda}{2} \bm{\alpha}^T \bm{\alpha},		
\end{equation}
where $\mathbf{P} = \mathbf{I}_n - \frac{1}{n}\bm{1}_n\bm{1}_n^T$ is referred as a \emph{projection} matrix or a \emph{centering} matrix \cite{cortes_centered_kernel, Jakkola_CKRR}. Finally, we get 
\begin{align*}
\bm{\alpha}^* &  = \left(\mathbf{\Phi}^T \mathbf{P}\mathbf{\Phi} + \lambda \mathbf{I}\right)^{-1} \mathbf{\Phi}^T \mathbf{P} \left(\bm{y}-\bar{y}\mathbf{1}\right)  \\  & \overset{(b)}{=}\mathbf{\Phi}^T \mathbf{P} \left(\mathbf{P}\mathbf{\Phi} \mathbf{\Phi}^T \mathbf{P} + \lambda \mathbf{I}_n\right)^{-1}\left(\bm{y}-\bar{y}\mathbf{1}_n\right) \\ 
& = \mathbf{\Phi}^T \mathbf{P} \left(\mathbf{P}\mathbf{K} \mathbf{P} + \lambda \mathbf{I}_n\right)^{-1}\left(\bm{y}-\bar{y}\mathbf{1}_n\right) \\ 
& = \mathbf{\Phi}^T \mathbf{P} \left(\mathbf{K}_c + \lambda \mathbf{I}_n\right)^{-1}\left(\bm{y}-\bar{y}\mathbf{1}_n\right), 
\end{align*}
where $\mathbf{K}_c=\mathbf{P}\mathbf{K} \mathbf{P}$ is the \emph{centered} kernel matrix as defined in \cite[Lemma1]{cortes_centered_kernel} and $(b)$ is obtained using the Woodbury identity. With some basic manipulations, the centered kernel ridge regression estimate of the output of data point $\bm{s}$ is given by
\begin{equation}
\label{CKRR_estimate}
\widehat{f}_c\left(\bm{s}\right)	= \bm{\kappa}_c\left(\bm{s}\right)^T \left(\mathbf{K}_c + \lambda \mathbf{I}\right)^{-1} \mathbf{P}\bm{y} + \bar{y}.
\end{equation}
Therefore, the feature map corresponding to $\mathbf{K}_c$ as well as the information vector can be respectively obtained as follows
\begin{align}
\phi_c\left(\bm{s}\right)  = \phi\left(\bm{s}\right) - \frac{1}{n} \sum_{i=1}^{n} \phi\left(\bm{x}_i\right), \:
\bm{\kappa}_c\left(\bm{s}\right)  = \mathbf{P} \bm{\kappa}\left(\bm{s}\right) - \frac{1}{n} \mathbf{P} \mathbf{K} \mathbf{1}_n .
\end{align}
Throughout this paper, we consider inner-product kernels \cite{bishop, karoui_kernel} defined as follows
\begin{align}
\label{inner_kernels}
{k}\left(\bm{x}, \bm{x}'\right) = g\left(\bm{x}^T\bm{x}' / p\right), \quad \forall \bm{x}, \bm{x}' \in \mathbb{R}^p,
\end{align}
and subsequently, 
$
\mathbf{K} = \left \{ g\left(\bm{x}_i^T\bm{x}_j / p\right) \right \}_{i, j=1}^n,
$
where the normalization\footnote{This is equivalent to normalize all data points by $\sqrt{p}$. This type of normalization has also been conisdered in \cite{Valentini_bias_variance_SVM} following the heuristic of Jakkola.} by $p$ in \eqref{inner_kernels} is convenient in the large $n, p$ regime as we will show later (also see \cite{romain_svm} for similar normalization in the analysis of LS-SVMs).
In the following, we conduct a large dimensional analysis of the performance of CKRR with the aim to get useful insights on the design of CKRR. Particularly, we will focus on studying the empirical and the prediction risks of  CKRR which we define as
\begin{align*}
& \mathcal{R}_{\rm train}=\frac{1}{n}\mathbb{E}_{\bm \epsilon}\sum_{i=1}^n\left|\widehat{f}_c({\bm x}_i)-f({\bm x}_i)\right|^2 \hspace{0.5cm},  \\ & \mathcal{R}_{\rm test}=\mathbb{E}_{\bm s\sim \mathcal{D},{\bm \epsilon}} \left[\left|\widehat{f}_c({\bm s})-f({\bm s})\right|^2\right]. 
\end{align*}

The novelty of our analysis with respect to previous studies lies in that 
\begin{enumerate}
\item It  provides a mathematical connection between the performance and the problem's dimensions and statistics resulting in a deeper understanding of centered kernel ridge regression in the large $n, p$ regime. 
\item It brings insights on how to choose the kernel function $g$ and the regularization parameter $\lambda$ in order to guarantee a good generalization performance for unknown data. 
\end{enumerate}
As far as the second point is considered, we show later that both the kernel function and the regularization parameter can be optimized jointly as a consequence of the mathematical result connecting the prediction risk with these design parameters. Our analysis does not assume a specific choice of the inner-product kernels, and is valid for the following popular ones.
\begin{itemize}
\item Linear kernels: $	k\left(\bm{x}, \bm{x}'\right) = \alpha \bm{x}^T \bm{x}'/p + \beta$. 
\item Polynomial kernels: $	k\left(\bm{x}, \bm{x}'\right) = \left(\alpha \bm{x}^T \bm{x}'/p + \beta \right)^d$. 
\item Sigmoid kernels: $	k\left(\bm{x}, \bm{x}'\right) = \tanh\left(\alpha \bm{x}^T \bm{x}'/p + \beta \right)$.
\item Exponential kernels: $	k\left(\bm{x}, \bm{x}'\right) = \exp\left(\alpha \bm{x}^T \bm{x}'/p + \beta \right)$.
\end{itemize}
\section{Main results} \label{main results}
\subsection{Technical assumptions}
In this section, we will present our theoretical results on the prediction risk of CKRR by first introducing the assumptions of data growth rate, kernel function $g$ and true function $f$.
Without loss of generality, we  assume that the data samples $\bm{x}_1, \cdots, \bm{x}_n \in \mathbb{R}^p$ are independent such that $\bm{x}_i \sim \mathcal{N}\left(\bm{0}_p, \mathbf{\Sigma}\right), i=1, \cdots, n$, with positive definite covariance matrix $\mathbf{\Sigma} \in \mathbb{R}^{p \times p}$. Throughout the analysis, we consider the large dimensional regime in which both $p$ and $n$ grow simultaneously large with the following growth rate assumptions.
\begin{assumption}[Growth rate] \label{assumption1} 
As $p, n \to \infty$ we assume the following 
\begin{itemize}
\item \textbf{Data scaling}: $p/n \to c \in \left(0, \infty\right)$.
\item \textbf{Covariance scaling}: $\lim \sup_p \left \| \mathbf{\Sigma} \right \| < \infty$.
\end{itemize}
\end{assumption}
The above assumptions are standard to consider and allow to exploit the large heritage of random matrix theory. Moreover, allowing $p$ and $n$ to grow large at the same rate is of practical interest when dealing with modern large and numerous data. The  assumption treating the covariance scaling is technically convenient since it allows to use important theoretical results on the behavior of large kernel matrices \cite{karoui_kernel, romain_clustering}. Under Assumption \ref{assumption1}, we have the
following implications.
\begin{align}
\label{diag_conv}
\bm{x}_i^T\bm{x}_i/p & \to_{a.s.} \frac{1}{p} \tr \mathbf{\Sigma} \triangleq \tau, \: i=1, \cdots, n. \\
\label{off_diag_conv}
\bm{x}_i^T\bm{x}_j/p & \to_{a.s.} 0, \: i \neq j,
\end{align}
where $0 < \tau < \infty$ due to the covariance scaling in Assumption \ref{assumption1}. This means that in the limit when $p \to \infty$, the kernel matrix $\mathbf{K}$ as defined earlier has all its entries converging to a deterministic limit. Applying a Taylor expansion on the entries of ${\bf K}$, and under some assumption on the kernel function $g$, it has been shown in \cite[Theorem 2.1]{karoui_kernel} that 
\begin{align} \label{K_limit}
\left \| \mathbf{K} - \mathbf{K}^{\infty}\right \| \to_{prob.} 0,
\end{align}
where the convergence is in operator norm and $\mathbf{K}^{\infty}$ exhibits nice properties and can be expressed using standard random matrix models. The explicit expression of $\mathbf{K}^{\infty}$ as well as its properties will be thoroughly investigated in Appendix A. 
We subsequently make additional assumptions to control the kernel function $g$ and the data generating function $f$. 
\begin{assumption}[Kernel function]\label{assumption2}
As in \cite[Theorem 2.1]{karoui_kernel}, we shall assume that $g$ is $\mathcal{C}^1$ in a neighborhood of $\tau$ and $\mathcal{C}^3$ in a neighborhood of 0. Moreover, we assume that for any independent observations $\bm{x}_i$ and $\bm{x}_j$  drawn from $\mathcal{N}({\bf 0}_p,{\bm \Sigma})$ and $k\in \mathbb{N}^{*}$,
$$
\mathbb{E}\left|g^{(3)}\left(\frac{1}{p}\bm{x}_i^{T}\bm{ x}_j\right)\right|^k <\infty
$$
where $g^{(3)}$ is the third derivative of $g$. 
\end{assumption}
\begin{assumption}[Data generating function]\label{assumption3} We assume that $f$ is $\mathcal{C}^1$ and polynomially bounded together with its derivatives.
We shall further assume that the moments of  $f(\bm{x})$ and its gradient are finite. More explicitly we need to have:
$$
\mathbb{E}_{\bm x\sim \mathcal{N}({\bf 0},{\bm \Sigma})}\left|f(\bm{x})\right|^k <\infty,
$$
and
\begin{align}
\label{bounded_grad}
\E_{\bm{x} \sim\mathcal{N}({\bf 0},{\bm \Sigma}) } \left \|  \bm{\nabla}_f\left(\bm{x}\right)\right \|_2^k < \infty , \text{where} \: \bm{\nabla}_f\left(\bm{x}\right) = \left \{ \frac{\partial f\left(\bm{x}\right)}{\partial x_l} \right \}_{l=1}^p.
\end{align}
\end{assumption}

As we will show later, the above assumptions are needed to guarantee a bounded asymptotic risk and to carry out the analysis.  Under the setting of Assumptions \ref{assumption1}, \ref{assumption2} and \ref{assumption3}, we aim to study the performance of CKRR by asymptotically evaluating the performance metrics defined in \eqref{risk_train}. Inspired by the fundamental results from \cite{karoui_kernel} and \cite{romain_clustering} in the context of spectral clustering, then following the observations made in
\eqref{diag_conv} and\eqref{off_diag_conv}, it is always possible to linearize the kernel matrix $\mathbf{K}$ around the matrix $g\left(0\right) \bm{11}^T$ which avoids dealing with the original intractable expression of $\mathbf{K}$. Note that the first component of the approximation given by $g\left(0\right) \bm{11}^T$ will be neutralized by the projection matrix $\mathbf{P}$ in the context of CKRR, which means that the behavior of CKRR will be essentially governed by the higher order approximations of $\mathbf{K}$. Consequently, one can resort to those approximations to have an explicit expression of the asymptotic risk in the large $p,n$ regime. This expression would hopefully reveal the mathematical connection between the regression risk and the data' statistics and dimensions as $p,n \to \infty$. 
\subsection{Limiting risk}
With the above assumptions at hand, we are now in a position to state the main results of the paper related to the derivation of the asymptotic risk of CKRR. Before doing so, we shall introduce some useful quantities. 
\begin{align*}
\nu & \triangleq g\left(\tau\right) - g\left(0\right) - \tau g'\left(0\right). \\  
\end{align*}
Also, for all $z \in \mathbb{C}$ at macroscopic distance from the eigenvalues $\lambda_1,\cdots,\lambda_p$ of $\frac{1}{p}\sum_{i=1}^n\bm{x}_i\bm{x}_i^{T}$, we define the \emph{Stieltjes transform} of  $\frac{1}{p}\sum_{i=1}^n\bm{x}_i\bm{ x}_i^{T}$  also known as the \emph{Stieltjes transform} of the Mar\u{c}enko-Pastur law as the unique solution to the following fixed-point equation \cite{couillet}
\begin{align}
\label{stieltjes}
{m}\left(z\right) & \triangleq - \left[cz - \frac{1}{n} \tr \mathbf{\Sigma} \left(\mathbf{I} + {m}\left(z\right) \mathbf{\Sigma}\right)^{-1}\right]^{-1},
\end{align}
where ${m}\left(z\right)$ in \eqref{stieltjes} is bounded as $p \to \infty $ provided that Assumption \ref{assumption1} is satisfied. 
For ease of notation, we shall use ${m}_z$ to denote ${m}\left(z\right)$ for all appropriate $z$.
The first main result of the paper is summarized in the following theorem, the proof of which is postponed to the Appendix A.
\begin{theorem}[Limiting risk]
\label{theorem1}	
Under Assumptions \ref{assumption1}, \ref{assumption2} and \ref{assumption3} and by taking $z=-\frac{\lambda + \nu}{g'\left(0\right)}$ for kernel functions satisfying $g'\left(0\right) \neq 0$ \footnote{The case of $g'\left(0\right)=0$ is asymptotically equivalent to take the sample mean as an estimate of $f$ which is neither of practical nor theoretical interest.} and $z$ at macroscopic distance from the eigenvalues of $\frac{1}{p}\sum_{i=1}^n\bm{x}_i\bm{x}_i^{T}$, both the empirical and the prediction risks converge in probability to a non trivial deterministic limits respectively given by 
\begin{align}
\label{risk_asy}
& \mathcal{R}_{train} - \mathcal{R}_{train}^{\infty} \to_{prob.}0, \\ 
& \mathcal{R}_{test} - \mathcal{R}_{test}^{\infty} \to_{prob.}0,
\end{align}
where the expressions of $\mathcal{R}_{train}^{\infty}$ and $\mathcal{R}_{test}^{\infty}$ are given in the top of the next page.
\begin{figure*}[t!]
\begin{equation}
\label{R_train_inf}
\begin{split}
\mathcal{R}_{train}^{\infty} & = \left(\frac{c \lambda {m}_z}{g'\left(0\right)}\right)^2 	\frac{n\sigma^2 + n\textbf{var}_f - n{m}_z  \E\left[\bm{\nabla}_f\right]^T \mathbf{\Sigma} \left[\left(\mathbf{I}+ {m}_z\mathbf{\Sigma}\right)^{-1} + \left(\mathbf{I}+ {m}_z\mathbf{\Sigma}\right)^{-2}\right] \mathbf{\Sigma} \E\left[\bm{\nabla}_f\right]}{n-{m}^2_z \tr \bm{\Sigma}^2 \left(\mathbf{I} + {m}_z\bm{\Sigma}\right)^{-2}}  
+ \sigma^2 - 2 \sigma^2 \frac{c\lambda {m}_z}{g'\left(0\right)}.
\end{split}
\end{equation}
\begin{align}
\label{R_test_inf}
\mathcal{R}^{\infty}_{test} & = \frac{n\sigma^2 + n\textbf{var}_f - n{m}_z  \E\left[\bm{\nabla}_f\right]^T \mathbf{\Sigma} \left[\left(\mathbf{I}+ {m}_z\mathbf{\Sigma}\right)^{-1} + \left(\mathbf{I}+ {m}_z\mathbf{\Sigma}\right)^{-2}\right] \mathbf{\Sigma} \E\left[\bm{\nabla}_f\right]}{n-{m}^2_z \tr \bm{\Sigma}^2 \left(\mathbf{I} + {m}_z\bm{\Sigma}\right)^{-2}}
-\sigma^2.
\end{align}
\end{figure*}
\end{theorem}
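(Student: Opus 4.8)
The plan is to reduce both risks to traces and quadratic forms of the resolvent $(\mathbf{K}_c+\lambda\mathbf{I})^{-1}$, to linearize $\mathbf{K}$ through \eqref{K_limit} while exploiting the centering, to evaluate the outcome with Mar\u{c}enko--Pastur deterministic equivalents, and to treat the data generating function $f$ by Gaussian (Stein) calculus. For the reduction, stacking the training predictions in \eqref{CKRR_estimate} and using that the matrix with columns $\bm{\kappa}_c(\bm{x}_1),\dots,\bm{\kappa}_c(\bm{x}_n)$ equals $\mathbf{K}_c$ gives $\widehat{f}_c(\mathbf{X}) = \mathbf{K}_c(\mathbf{K}_c+\lambda\mathbf{I})^{-1}\mathbf{P}\bm{y} + \bar{y}\,\mathbf{1}$; since $\mathbf{K}_c(\mathbf{K}_c+\lambda\mathbf{I})^{-1} = \mathbf{I} - \lambda(\mathbf{K}_c+\lambda\mathbf{I})^{-1}$ and $\mathbf{P}\bm{y}+\bar{y}\,\mathbf{1} = \bm{y} = f(\mathbf{X})+\sigma\bm{\epsilon}$, this collapses to
\[ \widehat{f}_c(\mathbf{X}) - f(\mathbf{X}) = \sigma\bm{\epsilon} - \lambda(\mathbf{K}_c+\lambda\mathbf{I})^{-1}\mathbf{P}\bm{y}. \]
Taking $\mathbb{E}_{\bm{\epsilon}}$ and using that $(\mathbf{K}_c+\lambda\mathbf{I})^{-1}\mathbf{P} = \mathbf{P}(\mathbf{K}_c+\lambda\mathbf{I})^{-1}\mathbf{P}$ is symmetric (because $\mathbf{K}_c$ has $\operatorname{span}(\mathbf{1})$ in its kernel and preserves $\operatorname{range}(\mathbf{P})$) yields
\[ \mathcal{R}_{\rm train} = \sigma^2 - \tfrac{2\sigma^2\lambda}{n}\,\tr\!\big((\mathbf{K}_c+\lambda\mathbf{I})^{-1}\mathbf{P}\big) + \tfrac{\sigma^2\lambda^2}{n}\,\tr\!\big((\mathbf{K}_c+\lambda\mathbf{I})^{-2}\mathbf{P}\big) + \tfrac{\lambda^2}{n}\,f(\mathbf{X})^{T}\mathbf{P}(\mathbf{K}_c+\lambda\mathbf{I})^{-2}\mathbf{P}f(\mathbf{X}), \]
and the parallel computation for $\widehat{f}_c(\bm{s}) = \bm{b}(\bm{s})^{T}\bm{y}$, $\bm{b}(\bm{s}) = (\mathbf{K}_c+\lambda\mathbf{I})^{-1}\bm{\kappa}_c(\bm{s}) + \tfrac{1}{n}\mathbf{1}$, after $\mathbb{E}_{\bm{\epsilon}}$ and then $\mathbb{E}_{\bm{s}}$, writes $\mathcal{R}_{\rm test}$ in terms of $\mathbb{E}_{\bm{s}}$ of quadratic forms in $\bm{\kappa}_c(\bm{s})$ against $(\mathbf{K}_c+\lambda\mathbf{I})^{-1}$ and $(\mathbf{K}_c+\lambda\mathbf{I})^{-2}$, a quadratic form in $f(\mathbf{X})$, and $\mathbb{E}_{\bm{s}}[f(\bm{s})^{2}]$; the offset $\bar{y}$ and the $\tfrac{1}{n}\mathbf{1}$ term exactly cancel the mean of $f$.

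Next I would linearize. By \eqref{K_limit} and the form of $\mathbf{K}^{\infty}$ derived in Appendix~A, $\mathbf{K} = g(0)\,\mathbf{1}\mathbf{1}^{T} + \tfrac{g'(0)}{p}\mathbf{X}^{T}\mathbf{X} + \nu\,\mathbf{I}_n + o_{\|\cdot\|}(1)$ up to corrections aligned with $\mathbf{1}$, so conjugation by $\mathbf{P}$ kills every $\mathbf{1}$-aligned component and $\mathbf{P}\mathbf{I}_n\mathbf{P} = \mathbf{P}$; since all matrices of interest are flanked by $\mathbf{P}$, one may work on $\operatorname{range}(\mathbf{P})$, where $\mathbf{K}_c+\lambda\mathbf{I}$ coincides, up to $o_{\|\cdot\|}(1)$, with $g'(0)\big(\tfrac{1}{p}\mathbf{P}\mathbf{X}^{T}\mathbf{X}\mathbf{P} - z\mathbf{I}\big)$ for $z = -(\lambda+\nu)/g'(0)$. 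The assumption $g'(0)\neq 0$ (needed to factor $g'(0)$ out) and ``$z$ at macroscopic distance from the spectrum'' make this resolvent bounded in operator norm; together with the moment and boundedness control of $\bm{\kappa}_c(\bm{s})$, $\|\mathbf{P}f(\mathbf{X})\|$ and $|f(\bm{s})|$ coming from Assumptions~\ref{assumption2}--\ref{assumption3}, the $o_{\|\cdot\|}(1)$ errors then propagate into vanishing errors on both risks. Centering further removes the rank-one perturbation distinguishing $\tfrac{1}{p}\mathbf{P}\mathbf{X}^{T}\mathbf{X}\mathbf{P}$ from $\tfrac{1}{p}\mathbf{X}^{T}\mathbf{X}$, whose limiting spectral measure is the Mar\u{c}enko--Pastur law characterized by \eqref{stieltjes}.

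It then remains to evaluate deterministic equivalents. Through the push-through identity, $\tfrac{1}{n}\tr((\mathbf{K}_c+\lambda\mathbf{I})^{-1}\mathbf{P})$ and $\tfrac{1}{n}\tr((\mathbf{K}_c+\lambda\mathbf{I})^{-2}\mathbf{P})$ become $\tfrac{1}{n}\tr$ of the resolvent of $\tfrac{1}{p}\mathbf{X}^{T}\mathbf{X}$ at $z$ and of its square; standard random matrix deterministic equivalents (Bai--Silverstein, Couillet--Debbah) give the first as $c\,m_z/g'(0)$ and the second as its $z$-derivative, and differentiating the fixed-point relation \eqref{stieltjes} shows this derivative carries exactly the factor $\big(n - m_z^{2}\,\tr\mathbf{\Sigma}^{2}(\mathbf{I}+m_z\mathbf{\Sigma})^{-2}\big)^{-1}$ present in \eqref{R_train_inf}--\eqref{R_test_inf}. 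For $\mathcal{R}_{\rm test}$, one integrates first over the independent Gaussian $\bm{s}\sim\mathcal{N}(\mathbf{0},\mathbf{\Sigma})$ using $\bm{\kappa}_c(\bm{s})\approx\tfrac{g'(0)}{p}\mathbf{P}\mathbf{X}^{T}(\bm{s}-\bar{\bm{x}})$ and then applies the same equivalents with the push-through identity on the $p\times p$ side, which produces the $\mathbf{\Sigma}$-weighted traces $\tr\mathbf{\Sigma}^{2}(\mathbf{I}+m_z\mathbf{\Sigma})^{-2}$ and $\mathbf{\Sigma}\big[(\mathbf{I}+m_z\mathbf{\Sigma})^{-1}+(\mathbf{I}+m_z\mathbf{\Sigma})^{-2}\big]\mathbf{\Sigma}$.

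The genuinely delicate object is the quadratic form $f(\mathbf{X})^{T}\mathbf{P}(\mathbf{K}_c+\lambda\mathbf{I})^{-2}\mathbf{P}f(\mathbf{X})$ (and its test analogue), because $f(\mathbf{X})$ is a nonlinear, \emph{non-independent} function of $\mathbf{X}$. I would split it with the Stein decomposition $f(\bm{x}) = \mathbb{E}[f] + \mathbb{E}[\bm{\nabla}_f]^{T}\bm{x} + r(\bm{x})$, in which Gaussian integration by parts, $\mathbb{E}[\bm{x}f(\bm{x})] = \mathbf{\Sigma}\,\mathbb{E}[\bm{\nabla}_f]$, makes $r$ centered and $L^{2}$-orthogonal to all linear forms, so that $\textbf{var}(r) = \textbf{var}_f - \mathbb{E}[\bm{\nabla}_f]^{T}\mathbf{\Sigma}\,\mathbb{E}[\bm{\nabla}_f]$. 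The constant is annihilated by $\mathbf{P}$; the linear component $\mathbf{X}^{T}\mathbb{E}[\bm{\nabla}_f]$ is explicitly correlated with the resolvent and, after push-through and the self-consistent equation, yields the correction $-\,n\,m_z\,\mathbb{E}[\bm{\nabla}_f]^{T}\mathbf{\Sigma}\big[(\mathbf{I}+m_z\mathbf{\Sigma})^{-1}+(\mathbf{I}+m_z\mathbf{\Sigma})^{-2}\big]\mathbf{\Sigma}\,\mathbb{E}[\bm{\nabla}_f]$ plus the share $\mathbb{E}[\bm{\nabla}_f]^{T}\mathbf{\Sigma}\,\mathbb{E}[\bm{\nabla}_f]$ of the variance; the residual vector $\big(r(\bm{x}_i)\big)_i$ contributes as i.i.d.\ noise of variance $\textbf{var}(r)$, so that with the $\sigma^{2}$ term one recovers the numerator $n\sigma^{2}+n\,\textbf{var}_f - n\,m_z\,\mathbb{E}[\bm{\nabla}_f]^{T}\mathbf{\Sigma}[\cdots]\mathbf{\Sigma}\,\mathbb{E}[\bm{\nabla}_f]$. \textbf{The main obstacle} is this last decorrelation step: proving that $\big(r(\bm{x}_i)\big)_i$ acts as independent noise inside a quadratic form against the resolvent. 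I would establish it by a leave-one-out / Sherman--Morrison argument, using that perturbing a single $\bm{x}_i$ moves the resolvent only by a rank-one, $O(1/n)$ amount while $r(\bm{x}_i)$ is by construction orthogonal to the linear functions of $\bm{x}_i$ entering that perturbation, with the polynomial-growth moment bounds of Assumptions~\ref{assumption2}--\ref{assumption3} controlling the remainders uniformly. Finally, the $\to_{prob.}$ claims follow since all the trace and quadratic quantities above are smooth, polynomially bounded functions of the Gaussian design, hence concentrate around their limits with vanishing fluctuations by Gaussian concentration of measure.
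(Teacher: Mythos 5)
Your proposal is sound and, on the random-matrix side, follows the same skeleton as the paper: El-Karoui's linearization of $\mathbf{K}$ with the centering matrix $\mathbf{P}$ annihilating the $g(0)\,\bm{1}\bm{1}^T$ (and $g''(0)$) components, absorption of $\nu$ into $z=-\frac{\lambda+\nu}{g'(0)}$, Mar\u{c}enko--Pastur deterministic equivalents for $\mathbf{Q}_z$, $\widetilde{\mathbf{Q}}_z$ and their squares (the derivative of the fixed point \eqref{stieltjes} producing the factor $n-m_z^2\tr\bm{\Sigma}^2(\mathbf{I}+m_z\bm{\Sigma})^{-2}$), and Gaussian integration by parts to produce $\E[\bm{\nabla}_f]$. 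Where you genuinely depart from the paper is the bias: you decompose $f(\bm{x})=\E[f]+\E[\bm{\nabla}_f]^T\bm{x}+r(\bm{x})$ with $r$ orthogonal in $L^2$ to linear forms, treat the linear part as the signal of an effective linear ridge problem and $r$ as extra noise of variance $\textbf{var}_f-\E[\bm{\nabla}_f]^T\bm{\Sigma}\E[\bm{\nabla}_f]$, whereas the paper keeps $\stackrel{\circ}{\bf f}({\bf X})={\bf f}({\bf X})-\E[f]\bm{1}$ intact and extracts the $\E[\bm{\nabla}_f]$ contributions directly inside leave-one-out expansions (the terms $Z_{11},\dots,Z_{14}$) via $\E[\bm{x}f(\bm{x})]=\bm{\Sigma}\E[\bm{\nabla}_f]$. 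The two routes are equivalent and your bookkeeping is consistent with \eqref{R_test_inf}; yours buys a transparent interpretation (CKRR asymptotically acts as linear ridge on the linear component of $f$, with $\sigma^2+\textbf{var}(r)$ as effective noise), but the decorrelation of $r$ inside resolvent quadratic forms that you flag as the main obstacle is resolved exactly by the conditional-expectation/leave-one-out mechanics the paper uses anyway, so the delicate step is the same in both proofs. Your explicit reduction of the training risk (which the paper omits) is also correct.

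Two places in your sketch need more than the blanket claim that the $o_{\|\cdot\|}(1)$ errors propagate. First, $\bm{\kappa}_c(\bm{s})\approx \frac{g'(0)}{p}{\bf P}{\bf X}(\bm{s}-\bar{\bm{x}})$ concerns a fresh test point and is not covered by the operator-norm statement for $\mathbf{K}$: the quadratic Taylor term $\tilde{\bm\kappa}(\bm{s})\propto g''(0)$ contributes to $\E_{\bm s}[\bm{\kappa}_c(\bm{s})\bm{\kappa}_c(\bm{s})^T]$ a non-negligible component proportional to $\bm{1}\bm{1}^T$ that is harmless only because $\mathbf{P}$ kills it; the paper devotes Lemma \ref{lemma:technical} to these moment estimates (including the correlations of $\tilde{\bm\kappa}$ with ${\bf X}\bm{s}$ and ${\bf K}\bm{1}$), and your proof needs the analogous control. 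Second, the closing appeal to Gaussian concentration of measure is too strong as stated: the functionals involve $f$ that is only polynomially bounded, hence not Lipschitz, and the variance control should instead go through the Gaussian (Nash--)Poincar\'e inequality combined with the resolvent differentiation formula, as in Lemmas \ref{lemma:very_technical} and \ref{lemma:control_variance}; your moment assumptions do support this. Finally, a notational slip: with ${\bf X}=[\bm{x}_1,\dots,\bm{x}_n]^T$ of size $n\times p$, the relevant matrices are ${\bf P}\frac{{\bf XX}^T}{p}{\bf P}$ and ${\bf X}\bm{s}$, not ${\bf X}^T{\bf X}$ flanked by ${\bf P}$.
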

Note that in the case where $\mathbf{\Sigma} = \mathbf{I}_p$, the limiting risks in \eqref{R_train_inf} and \eqref{R_test_inf} can be further simplified as 
\begin{align*}
\mathcal{R}_{train}^{\infty}  &  = \left(\frac{c \lambda {m}_z}{g'\left(0\right)}\right)^2  \\ & \times \frac{n \left(1+{m}_z\right)^2 \left(\sigma^2 + \textbf{var}_f\right)   - n{m}_z\left(2+{m}_z\right) \left \| \E\left[\bm{\nabla}_f\right] \right \|^2}{n\left(1+{m}_z\right)^2 - p{m}^2_z}   \\ & + \sigma^2 - 2 \sigma^2 \frac{c\lambda {m}_z}{g'\left(0\right)}.
\end{align*}
\begin{align*}
\mathcal{R}^{\infty}_{test}  & =  \frac{n \left(1+{m}_z\right)^2 \left(\sigma^2 + \textbf{var}_f\right) - n{m}_z\left(2+{m}_z\right) \left \| \E\left[\bm{\nabla}_f\right] \right \|^2}{n\left(1+{m}_z\right)^2 - p{m}^2_z}  \\ & -\sigma^2,
\end{align*}
where ${m}_z$ can be explicitly derived as in \cite{silverstein_book} 
$$
{m}_z = \frac{- \left(cz-c+1\right) - \sqrt{\left(cz-c-1\right)^2-4c}}{2cz}.
$$ 

\begin{remark}
\label{remark1}
From Theorem \ref{theorem1} it entails that the limiting prediction risk can be expressed using the limiting empirical risk in the following fashion.
\begin{align}
\label{test_relation_training}
\mathcal{R}^{\infty}_{test} = \left(\frac{c \lambda {m}_z}{g'\left(0\right)}\right)^{-2} \mathcal{R}^{\infty}_{train} - \sigma^2 \left(\frac{g'\left(0\right)}{c \lambda {m}_z} - 1\right)^2.
\end{align}
\end{remark}
\begin{lemma}[A consistent estimator of the prediction risk]
\label{lemma2}
Inspired by the outcome of Theorem \ref{theorem1} summarized in Remark \ref{remark1}, we construct a consistent estimator of the prediction risk given by 
\begin{align}
\label{test_estimate}
\widehat{\mathcal{R}}_{test} =  \left(\frac{c \lambda \widehat{{m}}_z}{g'\left(0\right)}\right)^{-2} \mathcal{R}_{train} - \sigma^2 \left(\frac{g'\left(0\right)}{c \lambda \widehat{{m}}_z} - 1\right)^2, \: \text{with} \: \lambda > 0,
\end{align}
in the sense that 
$	\mathcal{R}_{test} - \widehat{\mathcal{R}}_{test} \to_{prob.} 0,$
where ${m}_z$ can be consistently estimated as 
$	\widehat{{m}}_z = \frac{1}{p} \tr \left( \mathbf{XX}^T/p - z\mathbf{I}_n\right)^{-1}, \: \mathbf{X} = \left[\bm{x}_1, \cdots, \bm{x}_n\right]^T.$
\end{lemma}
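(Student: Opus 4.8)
The plan is to obtain Lemma~\ref{lemma2} as a direct consequence of Theorem~\ref{theorem1} and the relation in Remark~\ref{remark1}, the only genuinely new ingredient being the consistency of the empirical Stieltjes transform $\widehat{m}_z$. I would split the argument into two steps: first show that $\widehat{m}_z$ converges to the deterministic quantity $m_z$ solving \eqref{stieltjes}; then transfer this convergence, together with $\mathcal{R}_{train}\to_{prob.}\mathcal{R}_{train}^{\infty}$, through the fixed algebraic map that produces $\widehat{\mathcal{R}}_{test}$ from $\widehat{m}_z$ and $\mathcal{R}_{train}$, and finally combine with $\mathcal{R}_{test}\to_{prob.}\mathcal{R}_{test}^{\infty}$.

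For the first step, observe that $\widehat{m}_z=\frac1p\tr\left(\mathbf{X}\mathbf{X}^{T}/p-z\mathbf{I}_n\right)^{-1}$ is, up to normalization, a resolvent trace of the sample matrix built from $\bm{x}_1,\dots,\bm{x}_n\sim\mathcal{N}(\bm{0}_p,\mathbf{\Sigma})$. Under the data and covariance scalings of Assumption~\ref{assumption1}, the Mar\u{c}enko--Pastur / Silverstein theory \cite{couillet, silverstein_book} gives that, for any $z$ lying at macroscopic distance from the limiting spectral support, the resolvent is well defined with probability tending to one and $\widehat{m}_z$ converges almost surely (hence in probability) to the unique solution $m_z$ of the fixed-point equation \eqref{stieltjes}. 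Since the point $z=-\frac{\lambda+\nu}{g'(0)}$ used in Theorem~\ref{theorem1} is assumed to be at macroscopic distance from the eigenvalues of $\frac1p\sum_i\bm{x}_i\bm{x}_i^{T}$, this yields $\widehat{m}_z-m_z\to_{prob.}0$.

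For the second step, introduce
\[
h(u,r)=\left(\frac{c\lambda u}{g'(0)}\right)^{-2}r-\sigma^2\left(\frac{g'(0)}{c\lambda u}-1\right)^2 ,
\]
so that $\widehat{\mathcal{R}}_{test}=h\left(\widehat{m}_z,\mathcal{R}_{train}\right)$ while Remark~\ref{remark1} reads $\mathcal{R}_{test}^{\infty}=h\left(m_z,\mathcal{R}_{train}^{\infty}\right)$. The map $h$ is continuous at $(m_z,\mathcal{R}_{train}^{\infty})$ provided $m_z\neq 0$, and this holds because, by \eqref{stieltjes}, $m_z$ is the negative reciprocal of the finite quantity $cz-\frac1n\tr\mathbf{\Sigma}\left(\mathbf{I}+m_z\mathbf{\Sigma}\right)^{-1}$ and hence cannot vanish. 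Theorem~\ref{theorem1} gives $\mathcal{R}_{train}-\mathcal{R}_{train}^{\infty}\to_{prob.}0$, so $\mathcal{R}_{train}$ is in particular bounded in probability; combining this with $\widehat{m}_z\to_{prob.}m_z\neq 0$ via Slutsky's lemma / the continuous mapping theorem delivers $\widehat{\mathcal{R}}_{test}\to_{prob.}\mathcal{R}_{test}^{\infty}$. Adding the convergence $\mathcal{R}_{test}-\mathcal{R}_{test}^{\infty}\to_{prob.}0$ of Theorem~\ref{theorem1} and applying the triangle inequality then gives $\mathcal{R}_{test}-\widehat{\mathcal{R}}_{test}\to_{prob.}0$.

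I expect the only delicate point --- the main obstacle --- to be the division by $\widehat{m}_z$: since $\widehat{\mathcal{R}}_{test}$ contains $\widehat{m}_z^{-2}$, a naive use of the continuous mapping theorem for a function with a pole at $u=0$ is not automatically legitimate. The resolution is to exploit that this pole sits at macroscopic distance from the limit $m_z\neq 0$, so that $\widehat{m}_z$ stays inside a fixed compact set bounded away from $0$ (and from $\infty$) on an event of probability tending to one; on that event $h$ is Lipschitz in a neighborhood of $(m_z,\mathcal{R}_{train}^{\infty})$ and the convergence propagates cleanly. Everything else reduces to routine bookkeeping with Slutsky's lemma.
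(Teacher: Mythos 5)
Your proposal is correct and follows essentially the same route as the paper's own (very brief) proof: both rest on the relation of Remark \ref{remark1} together with the consistency $\widehat{m}_z - m_z \to 0$ (cited from \cite[Lemma 1]{romain_clustering}) and the convergences of Theorem \ref{theorem1}. Your additional care about continuity of the map and $m_z \neq 0$ simply fills in what the paper labels ``straightforward.''
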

\begin{proof}
The proof is straightforward relying on the relation in \eqref{test_relation_training} and the fact that 
\begin{align*}
\widehat{{m}}_z - {m}_z \to_{a.s.}0,
\end{align*}
as shown in \cite[Lemma 1]{romain_clustering}.
\end{proof}
Since the aim of any learning system is to design a model that achieves minimal prediction risk \cite{vapnik_regression_vc}, the relation described in Lemma \ref{lemma2} by \eqref{test_estimate} has enormous advantages as it permits to estimate the prediction risk in terms of the empirical risk and hence optimize the prediction risk accordingly. 
\begin{remark} \label{remark2} One important observation from the expression of the limiting prediction risk in \eqref{R_test_inf} is that the information on the kernel (given by $g'\left(0\right)$ and $\nu$) as well as the information on $\lambda$ are both encapsulated in ${m}_z$ with $z=-\frac{\lambda + \nu}{g'\left(0\right)}$. This means that one should optimize $z$ to have minimal prediction risk and thus jointly choose the kernel $g$ and the regularization parameter $\lambda$. Moreover, it
entails that the choice of the kernel (as long as $g'\left(0\right) \neq 0$) is asymptotically irrelevant since a bad choice of the kernel can be compensated by a good choice of $\lambda$ and vice-versa. 
This essentially implies that a linear kernel asymptotically achieves the same optimal performance as any other type of kernels \footnote{This does not mean that
all kernels will have the same performance for a given regularization parameter but means that they will achieve the same minimum prediction risk.}. 
\end{remark}

\subsection{A consistent estimator of the prediction risk}
Although the estimator provided in Lemma \ref{lemma2} permits to estimate the prediction risk by virtue of the empirical risk, it presents the drawback of being sensitive to small values of $\lambda$. In the following theorem, we provide a consistent estimator of the prediction risk constructed from the training data $\left \{ \left(\bm{x}_i, y_i\right) \right \}_{i=1}^n$ and  is less sensitive to small values of $\lambda$. 
\begin{theorem}[A consistent estimator of the prediction risk]
\label{consistent_estim}
Under Assumptions \ref{assumption1}, \ref{assumption2} and \ref{assumption3} with $g'\left(0\right) \neq 0$ and $z = -\frac{\lambda+\nu}{g'\left(0\right)}$, we construct a consistent estimator of the prediction risk based on the training data such that 
\begin{equation*}
\mathcal{R}_{test}-\widehat{\mathcal{R}}_{test} \to_{prob.}0,
\end{equation*}
\begin{equation}
\begin{split}
\label{consist_general}
\widehat{\mathcal{R}}_{test}  & = \frac{1}{\left(cz\widehat{{m}}_z\right)^2} \left[\frac{1}{np} \bm{y}^T \mathbf{PX} \left(z \widetilde{\mathbf{Q}}_z^2 -\widetilde{\mathbf{Q}}_z \right) \mathbf{X}^T\mathbf{P}\bm{y} + \textbf{var}\left(\bm{y}\right)\right]  \\ & - \sigma^2, 
\end{split}
\end{equation}
with $\widetilde{\mathbf{Q}}_z$ is the resolvent matrix given by
$
\widetilde{\mathbf{Q}}_z = \left(\frac{\mathbf{X}^T\mathbf{PX}}{p} - z\mathbf{I}_p\right)^{-1}	
$, 
with ${\bf X}=\left[\bm{x}_1,\cdots,\bm{x}_n\right]^{T}$. 
Moreover, in the special case where $\mathbf{\Sigma} = \mathbf{I}_p$, the estimator reduces to
\begin{equation}
\label{consist_identity}
\begin{split}
\widehat{\mathcal{R}}_{test}   & =  \frac{n \left(1+\widehat{{m}}_z\right)^2  \textbf{var}\left(\bm{y}\right) }{{n\left(1+\widehat{{m}}_z\right)^2 - p\widehat{{m}}^2_z}} \\ & -  \frac{\widehat{{m}}_z\left(2+\widehat{{m}}_z\right) \left[\bm{y}^T \mathbf{P}\frac{\mathbf{X}\mathbf{X}^T}{n} \mathbf{P}\bm{y}  -p\textbf{var}\left(\bm{y}\right)\right]}{n\left(1+\widehat{{m}}_z\right)^2 - p\widehat{{m}}^2_z}   -\sigma^2.
\end{split}
\end{equation}
\end{theorem}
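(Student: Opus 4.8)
The estimator $\widehat{\mathcal R}_{test}$ in \eqref{consist_general} is a deterministic function of the observables $\left(\mathbf X,\bm y\right)$ alone, so the plan is to show that each of its ingredients converges in probability to the matching term in the limiting expression $\mathcal R_{test}^{\infty}$ of \eqref{R_test_inf}, and then conclude by the continuous mapping theorem together with Theorem \ref{theorem1} (which already gives $\mathcal R_{test}-\mathcal R_{test}^{\infty}\to_{prob.}0$). Write $\widehat{\mathcal R}_{test}+\sigma^{2}=\left(cz\widehat m_z\right)^{-2}\left(T_1+T_2\right)$ with $T_1=\frac{1}{np}\bm y^{T}\mathbf{PX}\left(z\widetilde{\mathbf Q}_z^{2}-\widetilde{\mathbf Q}_z\right)\mathbf X^{T}\mathbf P\bm y$ and $T_2=\textbf{var}\left(\bm y\right)$. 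Three facts are needed. First, $\widehat m_z\asto m_z$, which is \cite[Lemma 1]{romain_clustering}; this is unaffected by using the centered Gram matrix $\left(\mathbf{PX}\right)\left(\mathbf{PX}\right)^{T}$ in place of $\mathbf X\mathbf X^{T}$ since that is a rank-one perturbation leaving normalized traces invariant, and in particular $\left(cz\widehat m_z\right)^{-2}\to\left(czm_z\right)^{-2}$ while $\widetilde{\mathbf Q}_z$ is bounded because $z$ is assumed at macroscopic distance from the spectrum of $\mathbf X^{T}\mathbf{PX}/p$. Second, $T_2\to_{prob.}\sigma^{2}+\textbf{var}_f$ by the law of large numbers, using the finite moments of $f(\bm x)$ from Assumption \ref{assumption3} and $\E\epsilon_i^{2}=1$. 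Third, a deterministic equivalent for $T_1$, which is the substance of the argument.

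For the third point I would re-run the analysis of Appendix A \emph{in reverse}: rather than reducing $\mathcal R_{train},\mathcal R_{test}$ to $m_z$ and $\mathbf\Sigma$, I would reduce the observable $T_1$ to those same quantities, which forces it onto the numerator and denominator of \eqref{R_test_inf}. Substituting $\bm y=f(\mathbf X)+\sigma\bm\epsilon$ yields $\mathbf X^{T}\mathbf P\bm y=\mathbf X^{T}\mathbf P f(\mathbf X)+\sigma\mathbf X^{T}\mathbf P\bm\epsilon$; a Hanson--Wright-type concentration bound (using $\|\bm y\|^{2}=\bigo(n)$ and the boundedness of $\widetilde{\mathbf Q}_z$, with the rank of the quadratic form at most $p$) shows that the $\bm\epsilon$-fluctuation of $T_1$ around its conditional mean is $o_P(1)$, and
\begin{equation*}
\E_{\bm\epsilon}T_1=\frac{1}{np}f(\mathbf X)^{T}\mathbf P\mathbf X\left(z\widetilde{\mathbf Q}_z^{2}-\widetilde{\mathbf Q}_z\right)\mathbf X^{T}\mathbf P f(\mathbf X)+\frac{\sigma^{2}}{np}\tr\left[\left(z\widetilde{\mathbf Q}_z^{2}-\widetilde{\mathbf Q}_z\right)\mathbf X^{T}\mathbf{PX}\right].
\end{equation*}
To the trace term I would apply the resolvent identities $\widetilde{\mathbf Q}_z\frac{\mathbf X^{T}\mathbf{PX}}{p}=\mathbf I+z\widetilde{\mathbf Q}_z$ and its $z$-derivative $\widetilde{\mathbf Q}_z^{2}\frac{\mathbf X^{T}\mathbf{PX}}{p}=\widetilde{\mathbf Q}_z+z\widetilde{\mathbf Q}_z^{2}$, which collapse it to $\frac{\sigma^{2}}{n}\left(z^{2}\tr\widetilde{\mathbf Q}_z^{2}-p\right)$, and then the deterministic equivalents $\widetilde{\mathbf Q}_z\leftrightarrow-\frac{1}{z}\left(\mathbf I+m_z\mathbf\Sigma\right)^{-1}$ and $\widetilde{\mathbf Q}_z^{2}=\frac{d}{dz}\widetilde{\mathbf Q}_z\leftrightarrow\frac{d}{dz}\!\left[-\frac{1}{z}\left(\mathbf I+m_z\mathbf\Sigma\right)^{-1}\right]$ — the latter, obtained by implicit differentiation of \eqref{stieltjes}, being precisely where the factor $n-m_z^{2}\tr\mathbf\Sigma^{2}\left(\mathbf I+m_z\mathbf\Sigma\right)^{-2}$ of \eqref{R_test_inf} enters. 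The $f$-dependent term additionally needs the first-order expansion of $f$ granted by Assumption \ref{assumption3}, namely $\frac{1}{\sqrt p}\mathbf X^{T}\mathbf P f(\mathbf X)=\frac{1}{\sqrt p}\mathbf X^{T}\mathbf{PX}\,\E\left[\bm\nabla_f\right]+\bm r$ with $\E\left\|\bm r\right\|^{2}=\bigo\!\left(p\,\textbf{var}_f\right)$, obtained by Gaussian integration by parts (Stein's lemma) and the Poincar\'e inequality exactly as in Appendix A; paired with the deterministic equivalents this produces the $n\textbf{var}_f$ and $\E\left[\bm\nabla_f\right]^{T}\mathbf\Sigma\left[\left(\mathbf I+m_z\mathbf\Sigma\right)^{-1}+\left(\mathbf I+m_z\mathbf\Sigma\right)^{-2}\right]\mathbf\Sigma\E\left[\bm\nabla_f\right]$ terms. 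Collecting everything and simplifying with \eqref{stieltjes}, $T_1+T_2$ converges to $\left(czm_z\right)^{2}$ times the ratio in \eqref{R_test_inf}; dividing by $\left(cz\widehat m_z\right)^{2}$ and subtracting $\sigma^{2}$ then gives $\widehat{\mathcal R}_{test}-\mathcal R_{test}^{\infty}\to_{prob.}0$, and the theorem follows.

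For the specialization $\mathbf\Sigma=\mathbf I_p$ I would substitute $\left(\mathbf I+m_z\mathbf\Sigma\right)^{-1}=\frac{1}{1+m_z}\mathbf I$ and the scalar fixed point $m_z=-\left[cz-c/\left(1+m_z\right)\right]^{-1}$ into the limits above, and use the push-through identity linking $\widetilde{\mathbf Q}_z=\left(\mathbf X^{T}\mathbf{PX}/p-z\mathbf I_p\right)^{-1}$ to the $n\times n$ resolvent of $\mathbf P\mathbf X\mathbf X^{T}\mathbf P/p$ in order to rewrite the bilinear form in terms of $\bm y^{T}\mathbf P\frac{\mathbf X\mathbf X^{T}}{n}\mathbf P\bm y$; the scalar coefficients then reduce to $\frac{n\left(1+m_z\right)^{2}}{n\left(1+m_z\right)^{2}-pm_z^{2}}$ and $\frac{m_z\left(2+m_z\right)}{n\left(1+m_z\right)^{2}-pm_z^{2}}$, which after replacing $m_z$ by $\widehat m_z$ is exactly \eqref{consist_identity}.

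The step I expect to be the main obstacle is the third one, and within it the deterministic equivalent of the $f$-dependent quadratic form $\frac{1}{np}f(\mathbf X)^{T}\mathbf P\mathbf X\left(z\widetilde{\mathbf Q}_z^{2}-\widetilde{\mathbf Q}_z\right)\mathbf X^{T}\mathbf P f(\mathbf X)$ for a single realization rather than in $\bm\epsilon$-mean. This couples two delicate ingredients: the random-matrix analysis of the \emph{centered} sample covariance $\mathbf X^{T}\mathbf{PX}/p$ — a rank-one modification of the standard sample-covariance model, so resolvent perturbation applies but must be controlled uniformly in a neighborhood of $z$ — and the replacement of the nonlinear $f$ by its linear Gaussian surrogate, where the polynomial boundedness and the finite gradient moments of Assumption \ref{assumption3} are exactly what is needed. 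Both of these are already the technical heart of the proof of Theorem \ref{theorem1}; the incremental difficulty here is the concentration argument that upgrades the $\bm\epsilon$-averaged identities to a statement about the realized estimator $\widehat{\mathcal R}_{test}$, together with the bookkeeping that matches the resulting constants to \eqref{R_test_inf} and to \eqref{consist_identity}.
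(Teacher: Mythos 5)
Your proposal is correct and follows essentially the same route as the paper's Appendix B: decompose the observable quadratic form $\bm{y}^T\mathbf{PX}\widetilde{\mathbf{Q}}_z\mathbf{X}^T\mathbf{P}\bm{y}$ into its noise and $f(\mathbf{X})$ parts, handle the noise part via resolvent/trace identities and the deterministic equivalent of $\widetilde{\mathbf{Q}}_z$, handle the $f$-part and the $\widetilde{\mathbf{Q}}_z^2$ term (as a $z$-derivative) by re-using the Gaussian integration-by-parts and variance-control machinery of Theorem \ref{theorem1}, and conclude $\widehat{\mathcal{R}}_{test}=\mathcal{R}_{test}^{\infty}+O_p(n^{-\frac{1}{2}+\delta})$ combined with Theorem \ref{theorem1}. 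The only cosmetic differences are your explicit invocation of a Hanson--Wright-type bound and the continuous-mapping framing, which the paper subsumes in its $O_p(n^{-\frac{1}{2}})$ estimates.
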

Theorem \ref{consistent_estim} provides a generic way to estimate the prediction risk from the pairs of training examples $\left \{ \left(\bm{x}_i, y_i\right) \right \}_{i=1}^n$. This allows using the closed form expressions in \eqref{consist_general} and \eqref{consist_identity}\footnote{The expression in \eqref{consist_identity} is useful because it does not involve any matrix inversion unlike the one in \eqref{consist_general}.} with the same set of arguments in Remark \ref{remark2} to jointly estimate the optimal kernel and the optimal regularization parameter $\lambda$. 
\subsection{Parameters optimization}
We briefly discuss how to jointly optimize the kernel function and the regularization parameter $\lambda$. As mentioned earlier, we exploit the special structure in the expression of the consistent estimate $\widehat{\mathcal{R}}_{test}$ where both parameters (the kernel function $g$ and $\lambda$) are summarized in $z$. We focus on the case where $\mathbf{\Sigma} = \mathbf{I}_p$ due to the tractability of the expression of $\widehat{\mathcal{R}}_{test}$ in \eqref{consist_identity}. By simple calculations, we can show that $\widehat{\mathcal{R}}_{test}$ is minimized when $\widehat{m}_z$ satisifies the equation
\begin{align*}
 	p\text{var}\left(\bm{y}\right) \left[p \widehat{m}_z^2 + n\left(1+\widehat{m}_z\right)^2\right]   & =   A  \left(n + n\widehat{m}_z + p\widehat{m}_z^2 \right),
\end{align*}
where $A = \bm{y}^T \mathbf{P}\frac{\mathbf{X}\mathbf{X}^T}{n} \mathbf{P}\bm{y}$, 
which admits the following closed-form solution
\begin{equation}
	\label{optimal_mz}
	\begin{split}
	m^{\star} & = \frac{\sqrt{nA^2 - 4npA^2 + 8np^2A \text{var}\left(\bm{y}\right)  -4np^3\text{var}^2\left(\bm{y}\right)}}{2\left(-pA + np \text{var}\left(\bm{y}\right)  + p^2 \text{var}\left(\bm{y}\right) \right)} \\ & 
	-  \frac{nA - 2pn\text{var}\left(\bm{y}\right)}{\left(-pA + np \text{var}\left(\bm{y}\right)  + p^2 \text{var}\left(\bm{y}\right) \right)}.
		\end{split}
\end{equation}
Then, look up $z^{\star}$ such that $\widehat{m}_{z^{\star}} = m^{\star}$. Finally, choose $\lambda$ and $g\left(.\right)$ such that $z^{\star} = -\frac{\lambda + \nu}{g'\left(0\right)}$. In the general case, it is difficult to get a closed from expression in terms of $z$ or $\widehat{m}_z$, however it is possible to numerically optimize the expression of $\widehat{\mathcal{R}}_{test}$ with respect to $z$. This can be done using simple one dimensional optimization techniques implemented in most softwares. 
\section{Experiments}
\label{experiments}
\subsection{Synthetic data}
To validate our theoretical findings, we consider both Gaussian and Bernoulli data. As shown in Figure \ref{fig:risk_big}, both data distributions exhibit the same behavior for all the settings with different kernel functions. More importantly, eventhough the derived formulas heavily rely on the Gaussian assumption, in the case where the data is Bernoulli distributed, we have a good agreement with the theoretical limits. This can be understood as part of the universality property often encountered in many high dimensional settings. Therefore, we conjecture that the obtained results are valid for any data distribution following the model 
	$ \bm{x} \sim \mathbf{\Sigma}^{\frac{1}{2}} \bm{z},$ where $\mathbf{\Sigma}$ satisfies Assumption \ref{assumption1} and $\{\bm{z}_i \}_{1 \leq i \leq p }$ the entries of $\bm{z}$ are i.i.d. with zero mean, unit variance and have bounded moments\footnote{We couldn't provide experiments for more data distributions due to space limitations.}. For more clarity, we refer the reader to Figure \ref{fig:risk_rmt_estim} as a representative of Figure \ref{fig:risk_big} when the data is Gaussian with $p=100$ and $n=200$. 
As shown in Figure \ref{fig:risk_rmt_estim}, the proposed consistent estimators are able to track the real behavior of the prediction risk for all types of kernels into consideration. It is worth mentioning however that the proposed estimator in Lemma \ref{lemma2} exhibits some instability for small values of $\lambda$ due to the inversion of $\lambda$ in \eqref{test_estimate}. Therefore, it is advised to use the estimator given by Theorem \ref{consistent_estim}. It is also clear from Figure \ref{fig:risk_rmt_estim} that all the considered kernels achieve the same minimum prediction risks but with different optimal regularizations $\lambda$. This is not the case for the empirical risk as shown in Figure \ref{fig:risk_rmt_estim}  and \eqref{R_train_inf} where the information on the kernel and the regularization parameter $\lambda$ are decoupled. Hence, in contrast to the prediction risk, the regularization parameter and the kernel can not be jointly optimized
to minimize the empirical risk.
\subsection{Real data}
As a further experiment, we validate the accuracy of our result over a real data set.  
To this end, we use the real \emph{Communities and Crime Data Set} for evaluation \cite{uci-dataset}, 
which has 123 samples and 122 features. 
For the experiment in Figure \ref{fig:riskt_estim_communities}, we divide the data set to have $60 \%$ training samples ($n=73$) and the remaining for testing ($n_{test} = 50$). 
The risks in Figure \ref{fig:riskt_estim_communities} are obtained by averaging the prediction risk (computed using $n_{test}$) over $500$ random permutation of the data. Although the data set is far from being Gaussian, we notice that the proposed prediction risk estimators are able to track the real behavior of the prediction risk for all types of considered kernels. We can also validate the previous insight from Theorem \ref{theorem1} where all kernels almost achieve the same minimum prediction risk. 
\begin{figure}[H]
\centering{	\includegraphics[scale=0.36]{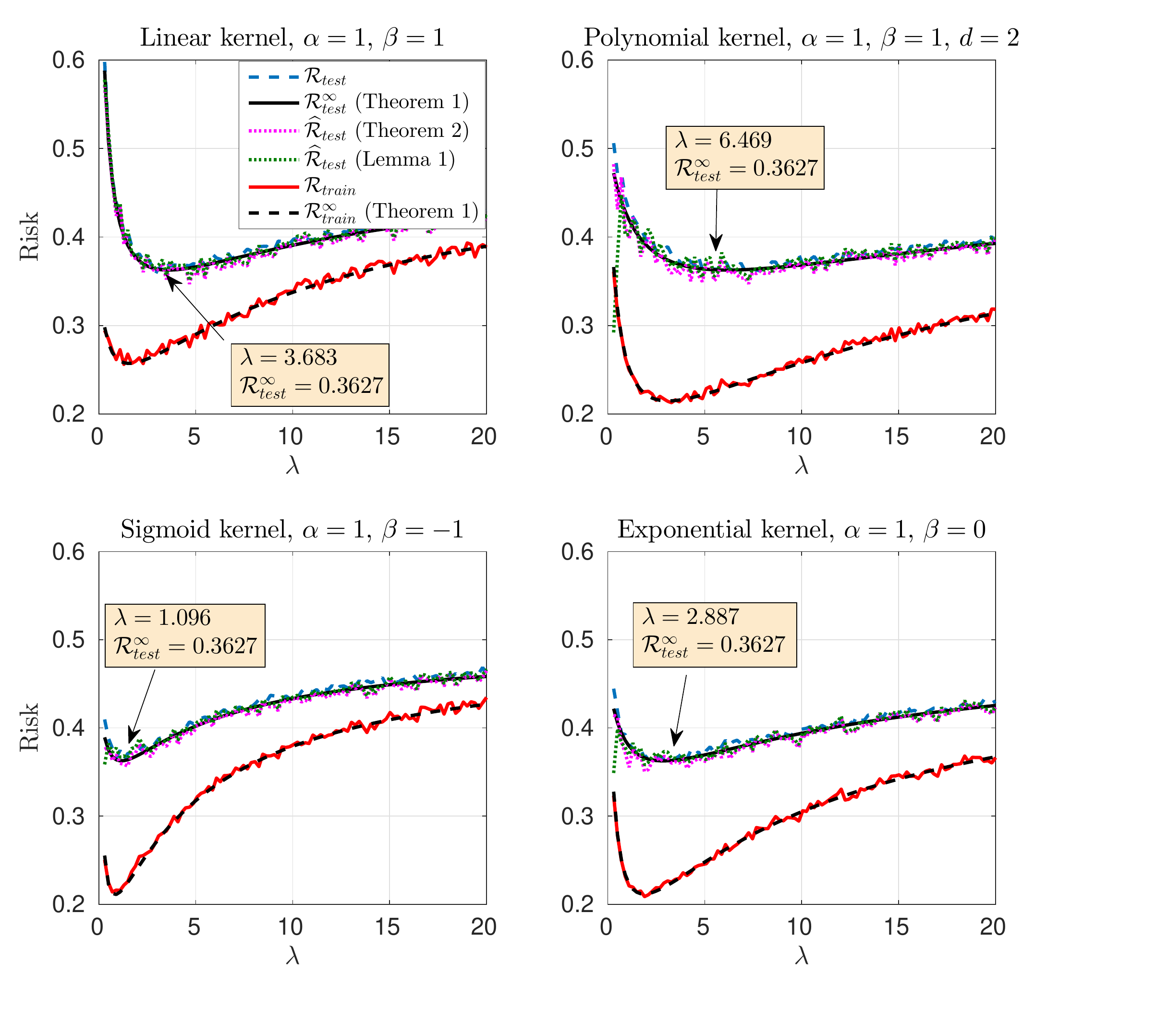}}
\caption{CKRR risk with respect to the regularization parameter $\lambda$ on Gaussian data ($\bm{x} \sim  \mathcal{N}\left(0_p, \{0.4^{|i-j|}\}_{i,j}\right)$, $n=200$ training samples with $p=100$ predictors, $\sigma^2 = 0.5$ for different types of kernels. The data generating function is taken to be $f\left(\bm{x}\right) = \sin\left(\bm{1}^T \bm{x} / \sqrt{p}\right)$.}
\label{fig:risk_rmt_estim}
\end{figure}
\begin{figure*}[t]
	\centering{	\includegraphics[scale=0.75]{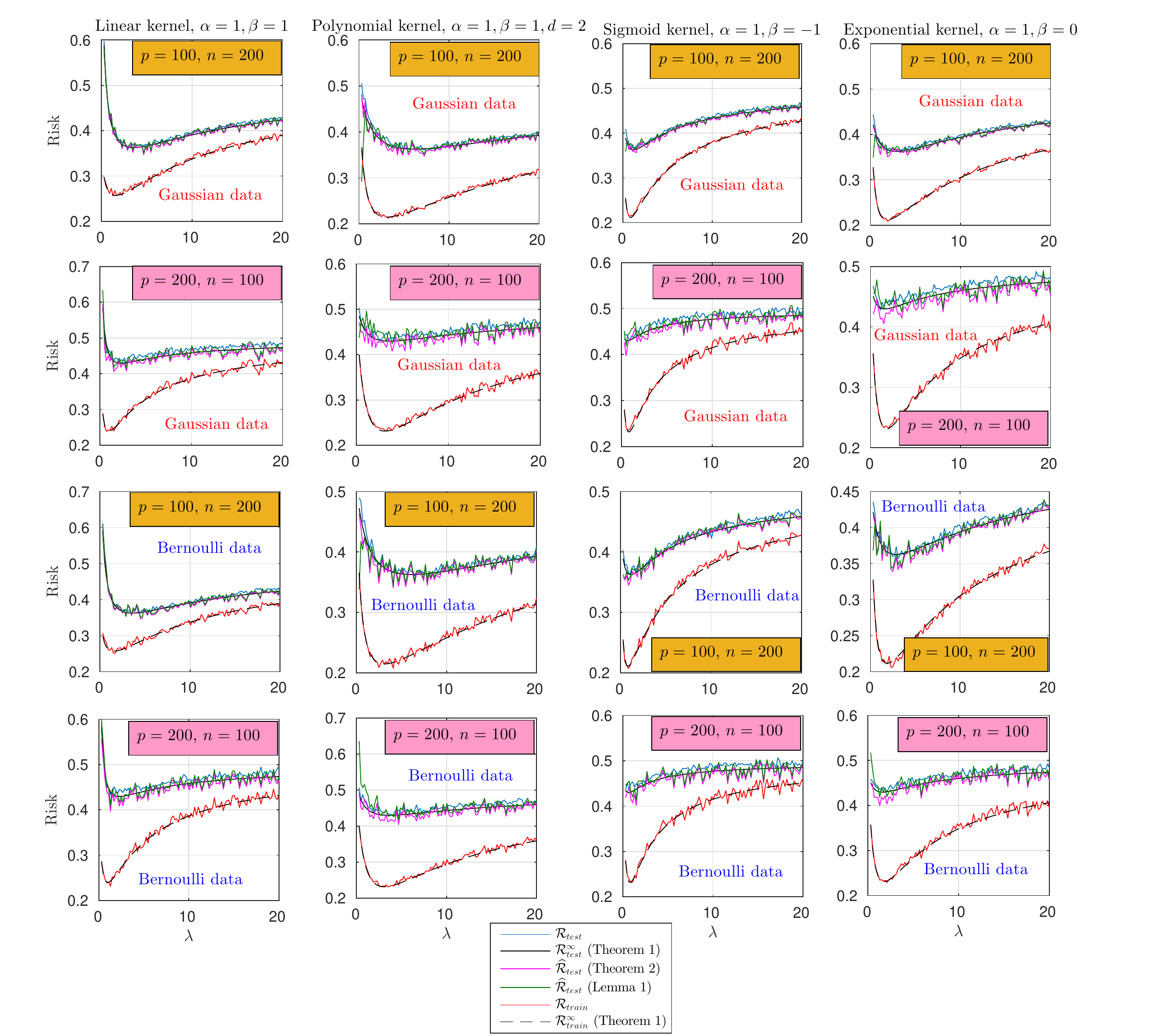}}
	\caption{CKRR risk with respect to the regularization parameter $\lambda$ on both Gaussian and Bernoulli data (i.e., $\bm{x} \sim  \mathcal{N}\left(0_p, \{0.4^{|i-j|}\}_{i,j}\right)$ and $\bm{x} = \{0.4^{|i-j|}\}_{i,j}^{\frac{1}{2}} \bm{z}$ with $\bm{z}_i \sim_{i.i.d.} 1-2 \times \text{Bernoulli} \left(1/2\right)$ respectively). The noise variance is taken to be $\sigma^2 = 0.5$ and the data generating function is $f\left(\bm{x}\right) = \sin\left(\bm{1}^T \bm{x} / \sqrt{p}\right)$.}
	\label{fig:risk_big}
\end{figure*}
\begin{figure}[h]
\centering
\includegraphics[scale=0.34]{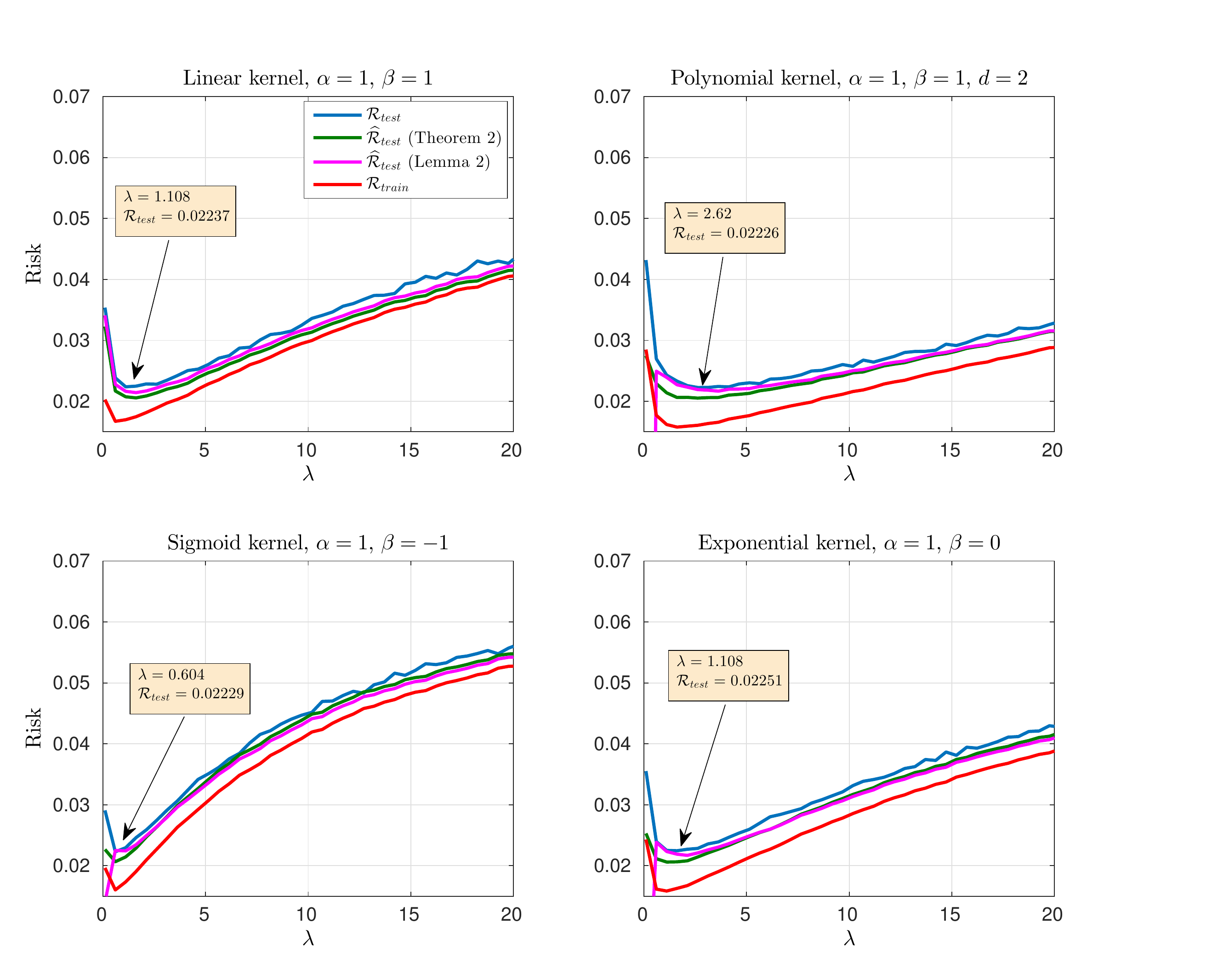}
\caption{CKRR risk with respect to the regularization parameter $\lambda$ on the \emph{Communities and Crime Data Set} where independent zero mean Gaussian noise samples with variance $\sigma^2=0.05$ are added to the true response.}
\label{fig:riskt_estim_communities}
\end{figure}
\section{Concluding Remarks}
\label{conclusion}
We conducted a large dimensional analysis of centered kernel ridge regression, which is a modified version of kernel ridge regression that accounts for the bias in the regression formulation. By allowing both the data dimension and the training size to grow infinitely large in a fixed proportion and by relying on fundamental tools from random matrix theory, we showed that both the empirical and the prediction risks converge in probability to a deterministic quantity that
mathematically connects these performance metrics to the data dimensions and statistics. A fundamental insight taken from the analysis is that asymptotically the choice of the kernel is irrelevant to the learning problem which asserts that a large class of kernels will achieve the same best performance in terms of prediction risk as a linear kernel. Finally, based
on the asymptotic analysis, we built a consistent estimator of the prediction risk making it possible to estimate the optimal regularization parameter that achieves  the minimum prediction risk. 
%
\section*{Appendix A}
\label{appendixA}
\section*{Proof of Theorem \ref{theorem1}}
Here, we provide the details of the derivation for the prediction risk. The analysis of the empirical risk follows in a very similar way and is thus omitted. 
Before delving into the proof of Theorem \ref{theorem1}, we shall introduce some fundamental results on the asymptotic behavior of inner-product kernel matrices established by El-Karoui \cite[Theorem 2.1]{karoui_kernel}. 
\begin{theorem*}[Asymptotic behavior of inner product kernel random matrices]
\label{karoui_inner} Under the assumptions of Theorem 2.1 \cite{karoui_kernel}, the kernel matrix $\mathbf{K}$ can be approximated by $\mathbf{K}^{\infty}$ in the sense that $\|\mathbf{K}-\mathbf{K}^{\infty}\| \to 0$ almost surely in operator norm, where
\begin{align*}
\mathbf{K}^{\infty} = \left[g\left(0\right) + g''\left(0\right)\frac{\tr \left(\mathbf{\Sigma}^2\right)}{2p^2} \right] \mathbf{1}\mathbf{1}^T + g'\left(0\right) \frac{\mathbf{XX}^T}{p} + \nu \mathbf{I}_n,
\end{align*}
where $\nu = g\left(\tau\right) - g\left(0\right)-\tau g'\left(0\right)$.
A similar result can be found in \cite{karoui_kernel} where the accuracy of ${\bf K}^{\infty}$ has been assessed as
$
{\bf K} ={\bf K}^{\infty} + O_{\|.\|}(\frac{1}{\sqrt{n}}),
\label{eq:K}
$
where $O_{\|.\|}(\frac{1}{\sqrt{n}})$ denotes a matrix with spectral norm converging in probability to zero with a rate $1/\sqrt{n}$. 
\end{theorem*}	
Define
\begin{align}
\mathbf{Q}_z = \left( \mathbf{P}\frac{\mathbf{XX}^T}{p} \mathbf{P} -z \mathbf{I}_n\right)^{-1}, \quad \widetilde{\mathbf{Q}}_z = \left(\frac{\mathbf{X}^T \mathbf{P}\mathbf{X}}{p} -z \mathbf{I}_p\right)^{-1}.
\end{align}
Note that using the Woodbury identity, it is easy to show the following useful relations  
\begin{align}
\label{Q_z_relation}
\mathbf{Q}_z & = -\frac{1}{z} \mathbf{I}_n + \frac{1}{zp}{\bf P} \mathbf{X} \widetilde{\mathbf{Q}}_z \mathbf{X}^T{\bf P},\\
\widetilde{\mathbf{Q}}_z &=-\frac{1}{z} \mathbf{I}_p +\frac{1}{zp} \mathbf{X}^{T}{\bf P}{\bf Q}_z{\bf PX}.
\end{align}
The above theorem has the following consequence
\begin{equation}
\begin{split}
\label{K_infty}
&   \left \| \left(\mathbf{K}_c + \lambda \mathbf{I} \right)^{-1}- \frac{1}{g'\left(0\right)} \left[ \mathbf{Q}_z + \frac{\frac{\nu}{g'\left(0\right)} \mathbf{Q}_z \frac{1}{n}\bm{11}^T \mathbf{Q}_z}{1 - \frac{\nu}{g'\left(0\right)} \frac{1}{n} \bm{1}^T \mathbf{Q}_z \bm{1}}\right] \right \|  \\ & =O_p(\frac{1}{\sqrt{n}}),
\end{split}
\end{equation}
where \eqref{K_infty} is obtained by a simple application of the Sherman-Morrison Lemma (inversion Lemma), along with the use of the resolvent identity ${\bf A}^{-1}-{\bf B}^{-1}= {\bf A}^{-1}({\bf B}-{\bf A}){\bf B}^{-1}$, which holds for any square invertible matrices ${\bf A}$ and ${\bf B}$. The proof of the above Theorem follows from the application of a Taylor expansion of the elements of $\frac{1}{p}{\bf XX}^{T}$ at the vicinity of their mean. Applying the same approach for vector 	${\bm \kappa}\left(\bm{s}\right)$, we get
$$
{\bm \kappa}({\bf s})= g(0){\bf 1}+ g'(0)\frac{1}{p}{\bf X}{\bf s} +\tilde{\bm \kappa}({\bf s}),
$$
where $\tilde{\bm \kappa}({\bf s})$ has elements \begin{equation}\left[\tilde{\bm \kappa}({\bf s})\right]_i=\frac{g''(0)}{2}\left(\frac{{\bf x}_i^{T}s}{p}\right)^2 +\frac{g^{3}(\xi_i) }{6}\left(\frac{{\bf x}_i^{T}{\bf s}}{p}\right)^3,
\label{eq:kappa_tilde}
\end{equation} with  $\xi_i\in \left[0,\frac{{\bf x}_i^{T}{\bf s}}{p}\right]$.  Then, since $\mathbb{E}\left|\frac{{\bf x}_i^{T}{\bf s}}{\sqrt{p}}\right|^{r}$ is uniformly bounded in $p$ for all $r\in\mathbb{N}$, we have for all $k\in \mathbb{N}$,  
\begin{equation}
\mathbb{E}\left\|\tilde{\bm \kappa}(\bf s)\right\|_2^{k}=O(\frac{1}{n^{\frac{k}{2}}}).
\label{eq:kappa_tilde}
\end{equation}
As shall be seen later, we need also to control $\mathbb{E}_{\bm s}\left[\tilde{\bm \kappa}(\bm s)\tilde{\bm \kappa}(\bm s)^{T}\right]$. 
This is performed in the following technical Lemma.
\begin{lemma}
\label{lemma:technical}
Let $\tilde{\bm \kappa}({\bm s})$ be as in \eqref{eq:kappa_tilde}. Then,
\begin{align*}
\mathbb{E}_{\bm s}\left[\tilde{\bm \kappa}(\bm s)\tilde{\bm \kappa}(\bm s)^{T}\right]&=\frac{1}{p^2}\left(\frac{g''(0)}{2}\right)^2\left(\frac{1}{p}\tr {\bm \Sigma}^2\right)^2{\bf 11}^{T}
+O_{\|.\|}(\frac{1}{n^{\frac{3}{2}}}).
\end{align*}
Similarly, the following approximations hold true
\begin{align*}
&\mathbb{E}_{\bm s}\left[{\bf X}{\bm s}\tilde{\bm \kappa}({\bm s})^{T}\right]=O_{\|.\|}(n^{-\frac{3}{2}}).\\
&\mathbb{E}_{\bm s}\left[\tilde{\bm \kappa}({\bm s })\frac{{\bf 1}^{T}}{n}{\bf KP}\right]=\frac{1}{p}\tr {\bm \Sigma} \frac{1}{n} \frac{{\bf 11}^{T}}{p}\frac{{\bf XX}^{T}}{p}{\bf P}+O_{\|.\|}(n^{-\frac{3}{2}}).
\end{align*}
\end{lemma}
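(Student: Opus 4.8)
## Proof Plan for Lemma \ref{lemma:technical}

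The plan is to evaluate each of the three expectations over $\bm{s}\sim\mathcal{D}$ by expanding the entries of $\tilde{\bm\kappa}(\bm s)$ according to \eqref{eq:kappa_tilde} and retaining only the terms that survive at order $n^{-1}$ in operator norm, discarding the rest into the $O_{\|.\|}(n^{-3/2})$ remainder. For the first identity, I would write $[\tilde{\bm\kappa}(\bm s)]_i = \frac{g''(0)}{2}(\bm x_i^T\bm s/p)^2 + \frac{g^{(3)}(\xi_i)}{6}(\bm x_i^T\bm s/p)^3$, so that the $(i,j)$ entry of $\tilde{\bm\kappa}(\bm s)\tilde{\bm\kappa}(\bm s)^T$ is a sum of four products. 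The dominant contribution comes from the square-times-square term, $\frac{(g''(0))^2}{4}\,\mathbb{E}_{\bm s}\big[(\bm x_i^T\bm s/p)^2(\bm x_j^T\bm s/p)^2\big]$; since $\bm s$ is Gaussian (or, by universality, satisfies the moment conditions) with covariance $\bm\Sigma$, conditionally on the $\bm x_i$'s this expectation evaluates via Wick's theorem to $\frac{1}{p^4}\big[(\bm x_i^T\bm\Sigma\bm x_i)(\bm x_j^T\bm\Sigma\bm x_j) + 2(\bm x_i^T\bm\Sigma\bm x_j)^2\big]$. Using the concentration results $\bm x_i^T\bm\Sigma\bm x_i/p \to_{a.s.} \frac1p\tr\bm\Sigma^2$ and $\bm x_i^T\bm\Sigma\bm x_j/p \to 0$ for $i\neq j$ (the same linearization arguments behind \eqref{diag_conv}--\eqref{off_diag_conv}, now with $\bm\Sigma$ inserted), the leading term becomes $\frac{1}{p^2}\big(\frac{g''(0)}{2}\big)^2\big(\frac1p\tr\bm\Sigma^2\big)^2$ on every entry, i.e. the rank-one matrix $\frac{1}{p^2}\big(\frac{g''(0)}{2}\big)^2\big(\frac1p\tr\bm\Sigma^2\big)^2\bm{11}^T$; the diagonal correction and the $2(\bm x_i^T\bm\Sigma\bm x_j)^2$ cross terms are $O_{\|.\|}(n^{-3/2})$ because each factor $\bm x_i^T\bm\Sigma\bm x_j/p$ for $i\neq j$ is $O(n^{-1/2})$.

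For the cross terms involving a cube, I would bound them using \eqref{eq:kappa_tilde}: the matrix whose $(i,j)$ entry is $\frac{g''(0)}{2}\frac{g^{(3)}(\xi_i)}{6}\mathbb{E}_{\bm s}[(\bm x_i^T\bm s/p)^2(\bm x_j^T\bm s/p)^3]$ has Frobenius norm (hence operator norm) controlled by $\|\tilde{\bm\kappa}\|$-type estimates together with Assumption \ref{assumption2} bounding the moments of $g^{(3)}$, yielding $O_{\|.\|}(n^{-3/2})$; similarly the cube-times-cube piece is $O_{\|.\|}(n^{-2})$. The second identity, $\mathbb{E}_{\bm s}[\mathbf X\bm s\,\tilde{\bm\kappa}(\bm s)^T] = O_{\|.\|}(n^{-3/2})$, follows the same template: the $(i,j)$ entry involves $\mathbb{E}_{\bm s}[(\bm x_i^T\bm s)(\bm x_j^T\bm s/p)^2]$, which is a third moment of a Gaussian and hence vanishes identically (odd moment), leaving only the contribution of the cubic piece of $\tilde{\bm\kappa}$, namely $\mathbb{E}_{\bm s}[(\bm x_i^T\bm s)(\bm x_j^T\bm s/p)^3]$, which after normalization is $O(n^{-3/2})$ uniformly. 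For the third identity I would expand $\mathbf K = g(0)\bm{11}^T + g'(0)\mathbf{XX}^T/p + \ldots$ inside $\frac{\mathbf 1^T}{n}\mathbf{KP}$, note that $\bm{11}^T\mathbf P = 0$ kills the constant term, and then $\mathbb{E}_{\bm s}[\tilde{\bm\kappa}(\bm s)]$ picks out only the quadratic piece (the cubic piece being $O(n^{-3/2})$): $\mathbb{E}_{\bm s}[(\bm x_i^T\bm s/p)^2] = \frac{1}{p^2}\bm x_i^T\bm\Sigma\bm x_i \to \frac1p\tr\bm\Sigma$, so $\mathbb{E}_{\bm s}[\tilde{\bm\kappa}(\bm s)] \approx \frac{g''(0)}{2}\frac{\tr\bm\Sigma}{p}\,\bm 1$... and combining with $\frac{\mathbf 1^T}{n}\cdot g'(0)\frac{\mathbf{XX}^T}{p}\mathbf P$ gives the stated $\frac1p\tr\bm\Sigma\cdot\frac1n\cdot\frac{\bm{11}^T}{p}\frac{\mathbf{XX}^T}{p}\mathbf P$ after absorbing the constant $g''(0)/2$ as written (tracking the exact scalar prefactor from the expansion of $\mathbf K^\infty$).

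The main obstacle I anticipate is the bookkeeping of the operator-norm remainders rather than any conceptual difficulty: one must argue carefully that entrywise bounds of order $n^{-3/2}$ on an $n\times n$ matrix translate into operator-norm bounds of the same order, which is not automatic (a generic $n\times n$ matrix with entries of size $n^{-3/2}$ has operator norm of order $n\cdot n^{-3/2} = n^{-1/2}$, not $n^{-3/2}$). The resolution is that the off-diagonal entries $\bm x_i^T\bm\Sigma\bm x_j/p$ behave like independent mean-zero fluctuations, so the relevant matrices are not arbitrary but have the structure of (rescaled) sample-covariance-type random matrices whose spectral norms are controlled by standard random matrix concentration — this is precisely the kind of estimate established in \cite{karoui_kernel} and \cite{romain_clustering}, on which the whole linearization \eqref{K_limit} rests. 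I would therefore invoke those results, together with the moment assumptions in Assumptions \ref{assumption2}--\ref{assumption3}, to upgrade the entrywise Taylor-remainder estimates to the operator-norm statements claimed, and use Gaussianity of $\bm s$ (Wick's theorem) to compute the surviving low-order moments exactly.
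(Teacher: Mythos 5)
Your treatment of the first (and main) identity is essentially the paper's own argument: expand $[\tilde{\bm\kappa}(\bm s)]_i[\tilde{\bm\kappa}(\bm s)]_j$ into quadratic and cubic Taylor pieces, compute the quadratic--quadratic term by Wick's theorem to get $\frac{1}{p^4}\bigl[(\bm x_i^T\bm\Sigma\bm x_i)(\bm x_j^T\bm\Sigma\bm x_j)+2(\bm x_i^T\bm\Sigma\bm x_j)^2\bigr]$, replace $\frac1p\bm x_i^T\bm\Sigma\bm x_i$ by $\frac1p\tr\bm\Sigma^2$, and push the mixed and cubic--cubic terms (entrywise $O(n^{-5/2})$ and $O(n^{-3})$) into the remainder. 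One difference is how you close the operator-norm bookkeeping: the paper does not need any random-matrix spectral-norm concentration here. It uses the elementary inequality $\|{\bf M}\|^2\le \tr{\bf M}{\bf M}^T$, so entrywise $O_p(n^{-\alpha})$ gives $\|{\bf M}\|=O_p(n^{1-\alpha})$ (enough for the $O(n^{-5/2})$ cross terms), the diagonal structure of the $i=j$ Wick correction (norm $O_p(n^{-2})$), and the rank-one structure of the replacement error, bounded by Cauchy--Schwarz on the two vectors, giving $O_p(n^{-3/2})$. Your appeal to sample-covariance-type spectral-norm bounds from \cite{karoui_kernel, romain_clustering} is both heavier and not really the right tool: after taking $\mathbb{E}_{\bm s}$ the remainder matrices are deterministic functions of $\mathbf X$ with explicit diagonal or rank-one structure, and that structure (plus the trace bound) is what delivers the rate.

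For the second and third identities (which the paper asserts ``similarly'' without proof) your sketches have concrete gaps. For $\mathbb{E}_{\bm s}[{\bf X}\bm s\,\tilde{\bm\kappa}(\bm s)^T]$ you cannot apply Wick directly to the cubic piece, since $g^{(3)}(\xi_j)$ depends on $\bm s$; and even setting that aside, $\mathbb{E}_{\bm s}[(\bm x_i^T\bm s)(\bm x_j^T\bm s/p)^3]$ is not $O(n^{-3/2})$ uniformly: on the diagonal it equals $3(\bm x_i^T\bm\Sigma\bm x_i)^2/p^3=O_p(n^{-1})$, and your own Frobenius device would only convert off-diagonal entries of size $n^{-3/2}$ into an operator norm of order $n^{-1/2}$, so the claimed rate does not follow from the entrywise estimates you state; one has to exploit the explicit structure of the matrix (roughly $\frac{c}{p^3}\,\mathbf X\bm\Sigma\mathbf X^T\,\mathrm{diag}(\bm x_j^T\bm\Sigma\bm x_j)$ plus smaller terms) rather than generic entrywise bounds. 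In the third identity you write $\mathbb{E}_{\bm s}[(\bm x_i^T\bm s/p)^2]=\frac{1}{p^2}\bm x_i^T\bm\Sigma\bm x_i\to\frac1p\tr\bm\Sigma$; in fact $\bm x_i^T\bm\Sigma\bm x_i$ concentrates around $\tr\bm\Sigma^2$, not $\tr\bm\Sigma$ (they coincide only for $\bm\Sigma=\mathbf I_p$), so the scalar prefactor you ``absorb'' is not being tracked correctly — your hedge about prefactors hides a real computation that still needs to be done. These two parts of the lemma therefore remain unproved as written, even though the overall strategy (kill odd Gaussian moments, use $\bm 1\bm 1^T\mathbf P=\bm 0$, keep the quadratic piece of $\tilde{\bm\kappa}$) is the right one.
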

\begin{proof}
To begin with, note that for ${\bf M}=\left\{m_{ij}\right\}_{i,j=1}^n$ a random matrix whose elements satisfies, $m_{i,j}=O_p(n^{-\alpha})$ for some $\alpha>0$, as $\|{\bf M}\|^2\leq \tr {\bf M}{\bf M}^{T}$, we have 
$
m_{i,j}=O_p(n^{-\alpha})\Rightarrow \|{\bf M}\|=O_p(n^{1-\alpha})\label{eq:res}
$. 
We first start by deriving $\mathbb{E}_{\bm s}\left[\left[\tilde{\bm \kappa}({\bf s})\right]_i\left[\tilde{\bm \kappa}({\bf s})\right]_j\right]$. We have
\begin{align*}
&\mathbb{E}_{\bm s}\left[\left[\tilde{\bm \kappa}({\bm s})\right]_i\left[\tilde{\bm \kappa}({\bm s})\right]_j\right]=\left(\frac{g''(0)}{2p^2}\right)^2\mathbb{E}_{\bm s}\left[{\bm s}^{T}{\bf x}_i{\bf x}_i^{T}{\bm  s}{\bm s}^{T}{\bf x}_j{\bf x}_j^{T}{\bm  s}\right]\\
&+\frac{g''(0)}{2}\mathbb{E}_{\bm s}\left[\left(\frac{{\bf x}_i^{T}{\bm s}}{p}\right)^2\left(\frac{{\bf x}_j^{T}{\bm s}}{p}\right)^3\frac{ g^3(\xi_i)}{6}\right] \\ &+\frac{g''(0)}{2}\mathbb{E}_{\bm s}\left[\left(\frac{{\bf x}_j^{T}{\bm s}}{p}\right)^2\left(\frac{{\bf x}_i^{T}{\bm s}}{p}\right)^3\frac{ g^3(\xi_j)}{6}\right]\\
&+\mathbb{E}_{\bm s}\left[\frac{g^{3}(\xi_i)}{6}\frac{g^{3}(\xi_j)}{6}\left(\frac{{\bf x}_i^{T}{\bm s}}{p}\right)^3\left(\frac{{\bf x}_j^{T}{\bm s}}{p}\right)^3\right]. 
\end{align*}
Using Assumption \ref{assumption2}, we can prove that $\mathbb{E}|g^{3}(\xi_i)|^{r}$ is bounded for all $r\in\mathbb{N}$. 
$
\mathbb{E}\left[\left(\frac{{\bf x}_i^{T}{\bm s}}{p}\right)^2\left|\frac{{\bf x}_j^{T}{\bm s}}{p}\right|^3| \frac{g^3(\xi_i)}{6}|\right]=O(n^{-\frac{5}{2}}). 
$
and
$
\mathbb{E}\left[\left|\frac{g^{3}(\xi_i)}{6}\right|\left|\frac{g^{3}(\xi_j)}{6}\right|\left|\frac{{\bf x}_i^{T}{\bm s}}{p}\right|^3\left|\frac{{\bf x}_j^{T}{\bm s}}{p}\right|^3\right]=O(n^{-3}).
$
Hence, by computing the expectation over ${\bm s} $ of the first term, we obtain
\begin{align*}
& \mathbb{E}_{\bm s}\left[\left[\tilde{\bm \kappa}({\bm s})\right]_i\left[\tilde{\bm \kappa}({\bm s})\right]_j\right]  =\frac{2}{p^2}\left(\frac{g''(0)}{2}\right)^2\left(\frac{{\bf x}_i^{T}{\bm \Sigma}{\bf x}_j}{p}\right)^2 \\ & + \frac{1}{p^2}\left(\frac{g''(0)}{2}\right)^2\frac{1}{p^2}{\bf x}_i^{T}{\bm \Sigma}{\bf x}_i {\bf x}_j^{T}{\bm \Sigma}{\bf x}_j   +\textcolor{black}{O_p(n^{-\frac{5}{2}})}.
\end{align*}
When $i\neq j$, $\left(\frac{{\bf x}_i^{T}{\bm \Sigma}{\bf x}_j}{p}\right)^2=O_p(n^{-1})$. 
Hence,
\begin{align*}
\mathbb{E}_{\bm s}\left[\left[\tilde{\bm \kappa}({\bm s})\right]_i\left[\tilde{\bm \kappa}({\bm s})\right]_j\right] & =\frac{2}{p^2}\left(\frac{g''(0)}{2}\right)^2\boldsymbol{\delta}_{i=j}\left(\frac{1}{p}{\bf x}_i^{T}{\bm \Sigma}{\bm x}_i\right)^2 \\ & +\frac{1}{p^2}\left(\frac{g''(0)}{2}\right)^2\frac{1}{p^2}{\bf x}_i^{T}{\bm \Sigma}{\bf x}_i {\bf x}_j^{T}{\bm \Sigma}{\bf x}_j \\ & +\textcolor{black}{O_p(n^{-\frac{5}{2}})}.
\end{align*}


Using \eqref{eq:res}, we thus obtain
\begin{align*}
& \mathbb{E}_{\bm s}\left[\tilde{\bm \kappa}({\bm s})\tilde{\bm \kappa}({\bm s})^{T}\right]\\ 
& =\frac{2}{p^2}\left(\frac{g''(0)}{2}\right)^2 \diag \left\{\left(\frac{1}{p}{\bf x}_i^{T}{\bm \Sigma}{\bf x}_i\right)^2\right\}_{i=1}^n \\ & +\frac{1}{p^2}\left(\frac{g''(0)}{2}\right)^2\left\{\frac{1}{p}{\bf x}_i^{T}{\bm \Sigma}{\bf x}_i\right\}_{i=1}^n\left(\left\{\frac{1}{p}{\bf x}_j^{T}{\bm \Sigma}{\bf x}_j\right\}_{j=1}^n\right)^{T}\\
&+O_{\|.\|}(n^{-\frac{3}{2}}).
\end{align*}
It is easy to see that $\left\|\frac{2}{p^2}\left(\frac{g''(0)}{2}\right)^2 \diag \left\{\left(\frac{1}{p}{\bf x}_i^{T}{\bm \Sigma}{\bf x}_i\right)^2\right\}_{i=1}^n\right\|=\textcolor{black}{O_p(n^{-2})}$. On the other hand, one can show that we can replace in the second term $\frac{1}{p}{\bf x}_i^{T}{\bm \Sigma} {\bf x}_i$ by $\frac{1}{p}\tr {\bm \Sigma}^2$. This is because
\begin{align*}
& \left\|\frac{1}{p^2}\left\{\frac{1}{p}{\bf x}_i^{T}{\bm \Sigma}{\bf x}_i-\frac{1}{p}\tr {\bm \Sigma}^2\right\}_{i=1}^n\left(\left\{\frac{1}{p}{\bf x}_j^{T}{\bm \Sigma}{\bf x}_j\right\}_{j=1}^n\right)^{T}\right\|_2 \\ &\leq \frac{1}{p^{\frac{3}{2}}} \left\|\left\{\frac{1}{p}{\bf x}_i^{T}{\bm \Sigma}{\bf x}_i-\frac{1}{p}\tr {\bm \Sigma}^2\right\}_{i=1}^n\right\|_2\left\|\left\{\frac{1}{p^{\frac{3}{2}}}x_j^{T}{\bm \Sigma}{\bf x}_j\right\}_{j=1}^n\right\|_2\\
&=O_p(n^{-\frac{3}{2}}).
\end{align*}
Putting all the above results together, we obtain
\begin{align*}
\mathbb{E}_{\bm s}\left[\tilde{\bm \kappa}({\bm s})\tilde{\bm \kappa}({\bm s})^{T}\right] & =\frac{1}{p^2}\left(\frac{g''(0)}{2}\right)^2\left(\frac{1}{p}\tr {\bm \Sigma}^2\right)^2{\bf 11}^{T} \\ & +O_{\|.\|}(n^{-\frac{3}{2}})
.
			\end{align*}
\end{proof}
Now using the approximation in \eqref{eq:K}, we obtain
\begin{equation}
{\bm \kappa}_c({\bf s}) = g'(0)\frac{1}{p}{\bf PXs} +{\bf P} \tilde{\bm \kappa}({\bf s}) -\frac{1}{n}{\bf PK1}.
\label{eq:kappa_c}
\end{equation}
\begin{theorem*}[Asymptotic behavior of $\mathbf{Q}_z$ and $\widetilde{\mathbf{Q}}_z$] As in \cite[Lemma 1]{romain_clustering}, let Assumption \ref{assumption1} holds, then as $p \to \infty$ and all $z\in \mathbb{C}\backslash \mathbb{R}_{+}$, 
\begin{align}
\label{Q_tilde_limit}
\widetilde{\mathbf{Q}}_z & \leftrightarrow  -\frac{1}{z}   \left( \mathbf{I} + m_{z} \mathbf{\Sigma}\right)^{-1}, \forall z \in \mathbb{C} \backslash \textbf{supp}\left(\mathbf{\Sigma}\right),
\end{align}
where $m_z$ is the unique  stieltjes transform solution, for all such $z$, to the implicit equation
$$
m_z= -\left(cz-\frac{1}{n}\tr {\bm \Sigma} \left( \mathbf{I} + m_{z} \mathbf{\Sigma}\right)^{-1}\right)^{-1},
$$
and the notation $ \mathbf{A} \leftrightarrow \mathbf{B}$ means that as $p \to \infty$, $\frac{1}{p} \tr \mathbf{M\left( A-B \right)} \to_{a.s.}0$ and $\bm{u}^T \left(\mathbf{A}-\mathbf{B}\right) \bm{v} \to_{a.s.}0$, for all deterministic Hermitian matrices $\mathbf{M}$ and deterministic vectors $\bm{u}, \bm{v}$ of bounded norms. 
Moreover, from \cite{walid_new} and \cite{bilinear} and $z\in\mathbb{C}\backslash\mathbb{R}_{+}$,
\begin{align}
&\frac{1}{n}\tr {\bf MQ}_z + \frac{1}{z}  \tr {\bf M} \left( \mathbf{I} + m_{z} \mathbf{\Sigma}\right)^{-1}=\frac{1}{n}\psi_n(z), \label{eq:trace}\\
&{\bm u}^T{\bf Q}_z{\bm v} + \frac{1}{z}  {\bm u}^{T}\left( \mathbf{I} + m_{z} \mathbf{\Sigma}\right)^{-1}{\bm v}=\frac{1}{\sqrt{n}}h_n(z), \label{eq:quad}
\end{align}
where for all $k\in\mathbb{N}$, $\mathbb{E}\left|\psi_n(z)\right|^k$ and $\mathbb{E}\left|h_n(z)\right|^k$ can be bounded uniformly in $n$ over any compact at a macroscopic distance from the limiting support of $\frac{1}{p}{\bf PXX}^{T}{\bf P}$.    
\label{th:approx}
\end{theorem*}
\begin{theorem}[An Integration by parts formula for Gaussian functionals] \cite{walid_new}
	With $f$ satisfying Assumption \ref{assumption3} and for $\bm{x} = \left[x_1, \cdots, x_p\right]^T \sim \mathcal{N}\left(\mathbf{0}_p, \mathbf{\Sigma}\right)$, we have 
	\begin{align}
	\mathbb{E} \left[ x_i f\left(\bm{x}\right)\right] = \sum_{j=1}^{p} \left[\mathbf{\Sigma} \right]_{i,j} \E \left[\frac{\partial f\left(\bm{x}\right)}{\partial x_j}\right],
	\end{align}
	or equivalently, 
$
	\mathbb{E} \left[\bm{x} f\left(\bm{x}\right)\right] = \mathbf{\Sigma} \E \left[\nabla f\left(\bm{x}\right)\right]. 
$
	\label{th:integration_part}
	\end{theorem}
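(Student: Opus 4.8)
The plan is to prove the identity by direct integration by parts against the Gaussian density, exploiting the fact that the gradient of the log-density of $\mathcal{N}(\mathbf{0}_p,\mathbf{\Sigma})$ is linear in $\bm{x}$. First I would write the density as $p(\bm{x}) = (2\pi)^{-p/2} |\mathbf{\Sigma}|^{-1/2} \exp\!\left(-\tfrac{1}{2}\bm{x}^T\mathbf{\Sigma}^{-1}\bm{x}\right)$ and record the key score identity $\nabla_{\bm{x}} p(\bm{x}) = -\mathbf{\Sigma}^{-1}\bm{x}\, p(\bm{x})$, which (since $\mathbf{\Sigma}$ is symmetric positive definite) rearranges to $\bm{x}\, p(\bm{x}) = -\mathbf{\Sigma}\,\nabla_{\bm{x}} p(\bm{x})$. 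Substituting this into $\mathbb{E}[\bm{x} f(\bm{x})] = \int \bm{x}\, f(\bm{x})\, p(\bm{x})\, d\bm{x}$ converts the weight $\bm{x}\,p(\bm{x})$ into $-\mathbf{\Sigma}\,\nabla_{\bm{x}} p(\bm{x})$, so that $\mathbb{E}[\bm{x} f(\bm{x})] = -\mathbf{\Sigma}\int f(\bm{x})\,\nabla_{\bm{x}} p(\bm{x})\, d\bm{x}$.

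Next I would integrate by parts coordinatewise. For the $j$-th partial derivative, $\int f(\bm{x})\,\frac{\partial p}{\partial x_j}\, d\bm{x} = -\int \frac{\partial f}{\partial x_j}\, p(\bm{x})\, d\bm{x}$, where the boundary term at infinity vanishes because $f$ is polynomially bounded by Assumption \ref{assumption3} while $p$ decays like a Gaussian. Stacking the $p$ identities over $j$ gives $\int f\,\nabla_{\bm{x}} p\, d\bm{x} = -\mathbb{E}[\nabla f(\bm{x})]$, and combining with the previous display yields the vector form $\mathbb{E}[\bm{x} f(\bm{x})] = \mathbf{\Sigma}\,\mathbb{E}[\nabla f(\bm{x})]$. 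Reading off the $i$-th coordinate then produces the scalar statement $\mathbb{E}[x_i f(\bm{x})] = \sum_{j=1}^{p} [\mathbf{\Sigma}]_{i,j}\,\mathbb{E}\!\left[\frac{\partial f}{\partial x_j}\right]$.

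The main obstacle is making the integration by parts rigorous, namely justifying that the boundary contributions vanish and that every integral in sight is finite. This is precisely where Assumption \ref{assumption3} does the work: polynomial boundedness of $f$ together with the exponential decay of $p$ guarantees $f(\bm{x})\,p(\bm{x}) \to 0$ as $\|\bm{x}\| \to \infty$ along each coordinate slice, while the finite-moment hypotheses on $f(\bm{x})$ and $\nabla f(\bm{x})$ ensure both sides are well-defined and supply the domination needed to differentiate under the integral sign. An equivalent route, which I would mention as an alternative, is to whiten via $\bm{x} = \mathbf{\Sigma}^{1/2}\bm{z}$ with $\bm{z} \sim \mathcal{N}(\mathbf{0}_p,\mathbf{I}_p)$, apply the standard isotropic Stein identity $\mathbb{E}[z_k h(\bm{z})] = \mathbb{E}\!\left[\frac{\partial h}{\partial z_k}\right]$ to $h(\bm{z}) = f(\mathbf{\Sigma}^{1/2}\bm{z})$, and use the chain rule $\nabla_{\bm{z}} h = \mathbf{\Sigma}^{1/2}\nabla_{\bm{x}} f$ to transform back; this reduces the anisotropic claim to the one-dimensional Gaussian integration-by-parts formula and isolates all analytic subtleties in the scalar case.
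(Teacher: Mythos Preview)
Your argument is correct and is the standard proof of this identity (Stein's lemma in the multivariate case). Note, however, that the paper does not actually prove this theorem: it is stated with a citation to \cite{walid_new} and used as a black-box tool in the proof of Theorem~\ref{theorem1}, so there is no ``paper's own proof'' to compare against. Your write-up would serve as a self-contained justification that the paper omits.
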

	\begin{theorem*}[Nash-Poincar\'e inequality]  \cite{walid_new}
		With $f$ satisfying Assumption \ref{assumption3} and for $\bm{x} = \left[x_1, \cdots, x_p\right]^T \sim \mathcal{N}\left(\mathbf{0}_p, \mathbf{\Sigma}\right)$, we have under the setting of the previous theorem,
		$
		{\rm var}({ f}({\bf x})) \leq \mathbb{E}\left[\nabla f({\bf x})^{T}{\bm \Sigma}\nabla f({\bf x})\right].
		$
		\end{theorem*}
		We shall also need the following differentiation formula. For $i\in\left\{1,\cdots,n\right\}$ and $j\in\left\{1,\cdots,p\right\}$
		\begin{equation}
		\frac{\partial \widetilde{\bf Q}_z}{\partial x_{ij}}= -\widetilde{\bf Q}_z{\bf e}_i{\bf e}_j^{T}\frac{\bf PX}{p}\widetilde{\bf Q}_z - \widetilde{\bf Q}_z\frac{{\bf X}^{T}{\bf P}}{p}{\bf e}_j{\bf e}_i^{T}\widetilde{\bf Q}_z,
		\label{eq:differentiation_formula}
		\end{equation}
		where ${\bf e}_i$ is the all zero vector with 1 at the $i$th entry. With this background on the asymptotic behavior of kernel matrices, in the following, we derive the limiting prediction and empirical risks. 
		Recall that the prediction risk writes as
		\begin{align*}
		\mathcal{R}_{\rm test} & =\mathbb{E}_{{\bm s,\epsilon}} \left[\left|{\bm \kappa}_c({\bf s})^{T}\left({\bf K}_c+\lambda {\bf I}_n\right)^{-1}{\bf Py}+\overline{y}-{ f}({\bf s})\right|^2\right],
		\end{align*}
		where ${\bf y}=\left[y_1,\cdots,y_n\right]^{T}={\bf f}({\bf X})+{\bm \epsilon}$ and $\bar{y}=\frac{{\bf 1}^{T}}{n}{\bf f}({\bf X})+\frac{1}{n}{\bm \epsilon}^{T}{\bf 1}$. Due to the independence of ${\bf s}$ and ${\bm \epsilon}$, the prediction risk can be decomposed into a variance and bias terms as
		$
		\mathcal{R}_{\rm test}= B+V,
		$
		where 
		\begin{align*}
		V&=\mathbb{E}_{\bm s,\epsilon}\left[\left|{\bm \kappa}_c({\bf s})^{T}\left({\bf K}_c+\lambda {\bf I}_n\right)^{-1}{\bf P}{\bm \epsilon}+\frac{1}{n}{\bm \epsilon}^{T}{\bf 1}\right|^2\right].\\
		B&=\mathbb{E}_{\bf s}\left[\left({\bm \kappa}_c({\bf s})^{T}\left({\bf K}_c+\lambda {\bf I}_n\right)^{-1}{\bf Pf}({\bf X})+\frac{1}{n}{\bf 1}^{T}{\bf f}({\bf X})-f({\bf s})\right)^2\right].
		\end{align*}
		Now, computing the expectation over ${\bm \epsilon}$, we obtain
		\begin{align*}
		V&=\sigma^2 \mathbb{E}_s\left[{\bm \kappa}_c({\bf s})^{T}\left({\bf K}_c+\lambda{\bf I}_n\right)^{-1}{\bf P}\left({\bf K}_c+\lambda{\bf I}_n\right)^{-1}{\bm \kappa}_c({\bf s})\right] \\ & +\textcolor{black}{O_p(n^{-1})}\\
		&=\sigma^2 \mathbb{E}_s \Biggl[{\bm \kappa}_c({\bf s})^{T}\left({\bf K}_c+\lambda{\bf I}_n\right)^{-2}{\bm \kappa}_c({\bf s}) \\ & -\left|{\bm \kappa}_c({\bf s})^{T}\left({\bf K}_c+\lambda{\bf I}_n\right)^{-1}\frac{{\bf 1}}{\sqrt{n}}\right|^2\Biggr] +\textcolor{black}{O_p(n^{-1})}.
		\end{align*}
		Let us start by controlling the second term. 
		Replacing ${\bm \kappa}_c({\bf s})$ by \eqref{eq:kappa_c},
		\begin{align*}
		& \mathbb{E}_s\left[\left|{\bm \kappa}_c({\bf s})^{T}\left({\bf K}_c+\lambda{\bf I}_n\right)^{-1}\frac{{\bf 1}}{\sqrt{n}}\right|^2\right]  \\ 
		&\leq 3g'(0)^2  \mathbb{E}_{\bm s }\left| \frac{1}{p}{\bf s}^{T}{\bf X}^{T}{\bf P}\left({\bf K}_c+\lambda{\bf I}_n\right)^{-1}\frac{{\bf 1}}{\sqrt{n}}\right|^2\\
		&+ 3\mathbb{E}_{\bm s}\left|\tilde{\bm \kappa}({\bf s})^{T}{\bf P}\left({\bf K}_c+\lambda{\bf I}_n\right)^{-1}\frac{{\bf 1}}{\sqrt{n}}\right|^2+\textcolor{black}{O_p(n^{-1})}.
		\end{align*}
		Computing the expectation over ${\bm s}$ of the term term of the last inequality, we can show that
		\begin{align*}
		&\mathbb{E}_{\bm s}\left[\left|\frac{1}{p}{\bf s}^{T}{\bf X}^{T}{\bf P}\left({\bf K}_c+\lambda{\bf I}_n\right)^{-1}\frac{{\bf 1}}{\sqrt{n}}\right|^2\right] \\ & = \frac{1}{p}\frac{{\bf 1}^{T}}{n}\left({\bf K}_c+\lambda{\bf I}_n\right)^{-1} \frac{{\bf P}{\bf X}{\bm \Sigma}{\bf X}^{T}{\bf P}}{p} \left({\bf K}_c+\lambda{\bf I}_n\right)^{-1}{\bf 1}\\
		&= \textcolor{black}{O_p(n^{-1})}.
		\end{align*}
		On the other hand, from \eqref{eq:kappa_tilde}, we have
		$$
		\mathbb{E}\left|\tilde{\bm \kappa}({\bf s})^{T}{\bf P}\left({\bf K}_c+\lambda{\bf I}_n\right)^{-1}\frac{{\bf 1}}{\sqrt{n}}\right|^2 = \textcolor{black}{O_p(n^{-1})}.
		$$
		The above approximations thus yield
		$$
		\mathbb{E}_s\left[\left|{\bm \kappa}_c({\bf s})^{T}\left({\bf K}_c+\lambda{\bf I}_n\right)^{-1}\frac{{\bf 1}}{\sqrt{n}}\right|^2\right] =O_{p}(n^{-1}).
		$$
		It remains now to compute the first term. Using \eqref{eq:kappa_tilde} along with \eqref{eq:kappa_c}, we have
		\begin{align*}
		V&=\sigma^2\mathbb{E}_{\bm s}\left[\frac{1}{p}(g'(0))^2{\bm s}^{T}{\bf X}^{T}{\bf P}\left({\bf K}_c+\lambda{\bf I}_n\right)^{-2}{\bf P}{\bf X}{\bm s}\right]  +O_p(n^{-1})\\
		&=\sigma^2\frac{1}{p}(g'(0))^2\tr {\bm\Sigma} {\bf X}^{T}{\bf P}\left({\bf K}_c+\lambda{\bf I}_n\right)^{-2}{\bf P}{\bf X} +O_p(n^{-1}).
		\end{align*}
		From \eqref{K_infty}, $V$ can be approximated as
		\begin{align*}
		V\stackrel{(a)}{=}&\sigma^2\frac{1}{p}\tr  {\bm\Sigma} {\bf X}^{T}{\bf P}{\bf Q}_z^2{\bf P}{\bf X} +O_p(n^{-1})\\
		&=\frac{\sigma^2}{p}\frac{\partial}{\partial t} \left[\tr  {\bm\Sigma} {\bf X}^{T}{\bf P}{\bf Q}_t{\bf P}{\bf X}\right]_{t=z} +O_p(n^{-1})\\
		&{=} \sigma^2\frac{\partial}{\partial t} \left[\frac{1}{p}\tr  {\bm\Sigma} {\bf X}^{T}{\bf P}{\bf X}\widetilde{\bf Q}_t\right]_{t=z} +  O_p(n^{-1})\\
		&= \sigma^2\frac{\partial}{\partial t} \left[\frac{1}{p}\tr  {\bm\Sigma} +\frac{t}{p}\tr {\bm \Sigma} \widetilde{\bf Q}_t\right]_{t=z}+  O_p(n^{-1})\\
		&=\frac{\partial}{\partial t} \left[\frac{t\sigma^2}{p}\tr {\bm \Sigma}{\widetilde{\bf Q}_t}\right]_{t=z} +  O_p(n^{-1}),
		\end{align*}
		where in $(a)$ the contribution of matrix   $\frac{\frac{\nu}{g'\left(0\right)} \mathbf{Q}_z \frac{1}{n}\bm{11}^T \mathbf{Q}_z}{1 - \frac{\nu}{g'\left(0\right)} \frac{1}{n} \bm{1}^T \mathbf{Q}_z \bm{1}}$ has been discarded since it only induces terms of order $O_p(n^{-1})$. 
		Using \eqref{eq:trace}, we thus have
		\begin{align*}
		\frac{\partial}{\partial t} \left[\frac{t\sigma^2}{p}\tr {\bm \Sigma}{\widetilde{\bf Q}_t}\right]_{t=z} & = \frac{\partial}{\partial t} \left[-\frac{\sigma^2}{p}\tr {\bm \Sigma} \tr \left({\bf I}_p+m_t{\bm \Sigma}\right)^{-1}\right]_{t=z}  \\ 
		& + O_p(n^{-1}).
		\end{align*}
		Taking the derivative over $z$, we find after simple calculations that
		\begin{align*}
		\frac{\partial}{\partial t} \left[\frac{t\sigma^2}{p}\tr {\bm \Sigma}{\widetilde{\bf Q}_t}\right]_{t=z} & = \frac{\sigma^2m_z^2}{n-m_z^2   \tr {\bm \Sigma}^2({\bf I}_p+m_z{\bm \Sigma})^{-2}} \\ & \times \tr {\bm \Sigma}^2({\bf I}_p+m_z{\bm \Sigma})^{-2} + O_p(n^{-1}).
		\end{align*}
		Putting all the above derivations together, we finally obtain
		
		\begin{equation*}
		V=\frac{\sigma^2m_z^2}{n-m_z^2\tr {\bm \Sigma}^2({\bf I}_p+m_z{\bm \Sigma})^{-2}}\tr {\bm \Sigma}^2({\bf I}_p+m_z{\bm \Sigma})^{-2} + O_p(n^{-1}). 
		\end{equation*}
		
		{\bf Evaluation of the bias term}.
		
		To begin with, we first expand $B$ as
		\begin{align*}
		B&=\mathbb{E}_{\bm s} \left[{\bf f}({\bf X})^{T}{\bf P}\left({\bf K}_c+\lambda {\bf I}_n\right)^{-1}{\bm \kappa}_c({\bf s}){\bm \kappa}_c({\bf s})^{T}\left({\bf K}_c+\lambda {\bf I}_n\right)^{-1}{\bf P}{\bf f}({\bf X})\right] \\ & +\mathbb{E}_{\bm s}\left|\frac{1}{n}{\bf 1}^{T}{\bf f}({\bf X})-f({\bf s})\right|^2
		+2\mathbb{E}_{\bm s} \Biggl[{\bf f}({\bf X})^{T}{\bf P}\left({\bf K}_c+\lambda {\bf I}_n\right)^{-1}{\bm \kappa}_c({\bf s}) \\ & \times \left(\frac{1}{n}{\bf 1}^{T}{\bf f}({\bf X})-f({\bf s})\right) \Biggr].
		\end{align*}
		We will sequentially treat the above three terms. To begin with, we control first $\mathbb{E}_{\bm s}\left[{\bm \kappa}_c({\bm s}){\bm \kappa}_c({\bm s})^{T}\right]$ which we expand as
		\begin{align*}
		&\mathbb{E}_{\bm s}\left[{\bm \kappa}_c({\bm s}){\bm \kappa}_c({\bm s})^{T}\right] \\ & = (g'(0))^2\frac{1}{p^2}{\bf PX}{\bm \Sigma}{\bf X}^{T}{\bf P}  +g'(0)\frac{1}{p}{\bf PX}{\bm s}\tilde{\bm \kappa}(s)^{T}{\bf P}  \\ & +\frac{g'(0)}{p} {\bf P}\mathbb{E}_{\bm s }\left[\tilde{\bm \kappa}({\bm s}){\bf s}^{T}\right]{\bf XP}
		+{\bf P}\mathbb{E}_{\bm s}\tilde{\bm \kappa}({\bf  s})\tilde{\bm \kappa}({\bf  s})^{T}{\bf P}-\frac{1}{n}{\bf P}\mathbb{E}_{\bm s}\tilde{\bm \kappa}({\bm s}){\bf 1}^{T}{\bf KP} \\ & -\frac{1}{n}{\bf PK1}\mathbb{E}_{\bm s}\tilde{\bm \kappa}({\bm s})^{T}{\bf P}+\frac{1}{n^2}{\bf PK11}^{T}{\bf P}.
		\end{align*}
		Using \eqref{eq:kappa_c} along with Lemma \ref{lemma:technical}, we obtain
		\begin{align*}
		\mathbb{E}_{\bm s}\left[{\bm \kappa}_c({\bm s}){\bm \kappa}_c({\bm s})^{T}\right] & =(g'(0))^2\frac{1}{p^2}{\bf PX}{\bm \Sigma}{\bf X}^{T}{\bf P} +\frac{1}{n^2}{\bf PK11}^{T}{\bf KP} \\ & +O_{\|.\|}(n^{-\frac{3}{2}}). 
		\end{align*}
		Replacing ${\bf K}$ by ${\bf K}_{\infty}$, we thus obtain
		\begin{align*}
		\mathbb{E}_{\bm s}\left[{\bm \kappa}_c({\bm s}){\bm \kappa}_c({\bm s})^{T}\right] & =(g'(0))^2\frac{1}{p^2}{\bf PX}{\bm \Sigma}{\bf X}^{T}{\bf P} \\ &  +\frac{(g'(0))^2}{n^2}{\bf P}\frac{{\bf X}{\bf X}^{T}}{p}{\bf 11}^{T}\frac{{\bf XX}^{T}}{p}{\bf P}+O_{\|.\|}(n^{-\frac{3}{2}}).
		\end{align*}
		$$
		$$
		From Assumption \ref{assumption3}, we can prove that
		$$
		\left\|{\bf P}{\bf f}({\bf X})\right\|=O_p(\sqrt{n}).
		$$
		Hence,  the first term in $B$ can be approximated as 
		\begin{align*}
		&\mathbb{E}_{\bm s} \Biggl[{\bf f}({\bf X})^{T}{\bf P}\left({\bf K}_c+\lambda {\bf I}_n\right)^{-1}{\bm \kappa}_c({\bf s}){\bm \kappa}_c({\bf s})^{T}\left({\bf K}_c+\lambda {\bf I}_n\right)^{-1} \\ & \times {\bf P}{\bf f}({\bf X}) \Biggr]\\
		&=\left(g'(0)\right)^2{\bf f}({\bf X})^{T}{\bf P}\left({\bf K}_c+\lambda {\bf I}_n\right)^{-1}{\bf P}\frac{{\bf X}{\bm \Sigma}{\bf X}^{T}}{p^2}{\bf P} \\ & \times\left({\bf K}_c+\lambda {\bf I}_n\right)^{-1} {\bf P}{\bf f}({\bf X}) +\left(g'(0)\right)^2{\bf f}({\bf X})^{T}{\bf P}\left({\bf K}_c+\lambda{\bf I}_n\right)^{-1} \\ & \times \frac{1}{n^2}{\bf P}\frac{{\bf XX}^{T}}{p}{\bf 11}^{T}\frac{{\bf XX}^{T}}{p}{\bf P}({\bf K}_c+\lambda{\bf I}_n)^{-1} {\bf Pf}({\bf X})  \\ & +O_p(n^{-\frac{1}{2}}) \\ 
		&=Z_1+Z_2+O_p(n^{-\frac{1}{2}}).
		\end{align*}
		We will start by treating $Z_1$. From \eqref{K_infty}, we can show that 
		$$
		Z_1=\frac{1}{p^2}{\bf f}({\bf X})^{T}{\bf P}{\bf X}\widetilde{\bf Q}_z{\bm \Sigma}\widetilde{\bf Q}_z{\bf X}^{T} {\bf P}{\bf f }({\bf X})+O_p(n^{-\frac{1}{2}}).
		$$
		To treat $Z_1$, we first decompose ${\bf P}{\bf f}({\bf X})$ as
		$$
		{\bf P}{\bf f}({\bf X}) = {\bf f}({\bf X})-\mathbb{E}_{\bm{x}}(f(\bm{x})){\bf 1} + \mathbb{E}_{\bm{x}}(f(\bm{x})){\bf 1} -\frac{1}{n}{\bf 1}^{T}{\bf f}({\bf X}),
		$$
		where from the classical probability results, we have
		$$
		\left\|\mathbb{E}_{{\bf x}}(f({\bf x})){\bf 1} -\frac{1}{n}{\bf 1}^{T}{\bf f}({\bf X}) {\bf 1}\right\|_2=O_p(1).
		$$
		Hence, we can replace ${\bf P}{\bf f}({\bf X})$ by \\ $\stackrel{\circ}{\bf f}({\bf X})={\bf f}({\bf X})-\mathbb{E}_{\bm{x}}(f(\bm{x})){\bf 1}$ up  to an error $O_p(n^{-\frac{1}{2}})$, thus yielding
		$$
		Z_1=\frac{1}{p^2}\stackrel{\circ}{\bf f}({\bf X})^{T}{\bf X}\widetilde{\bf Q}_z{\bm \Sigma}\widetilde{\bf Q}_z{\bf X}^{T} \stackrel{\circ}{\bf f }({\bf X})+O_p(n^{-\frac{1}{2}}).
		$$
		To treat $Z_1$, we shall resort to the following lemma.
		
		\begin{lemma}
			Let ${\bf A}$ be a $p\times p$ symmetric matrix with a uniformly bounded spectral norm. 
			Let $f$  satisfy Assumption \ref{assumption3} and $z\in\mathbb{C}\backslash\mathbb{R}_{+}$. Consider the following function. 
			\begin{align*}
			{ h}({\bf X}) & =\frac{1}{p^2}\stackrel{\circ}{\bf f}({\bf X})^{T}{\bf X}\widetilde{\bf Q}_z{\bf A}\widetilde{\bf Q}_z{\bf X}^{T} \stackrel{\circ}{\bf f}({\bf X}).
			\end{align*}
			Then, for any $\delta >0$,
			$
			{\rm var}({h}({\bf X}))=O(n^{-1+\delta}).
			$
			\label{lemma:very_technical}
			\end{lemma}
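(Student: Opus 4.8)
The natural tool is the Nash--Poincar\'e inequality stated above, applied to $h$ viewed as a smooth function of the Gaussian vector ${\rm vec}({\bf X}^{T})\sim \mathcal{N}({\bf 0},{\bf I}_n\otimes{\bm \Sigma})$ (the rows ${\bf x}_1,\dots,{\bf x}_n$ being independent). It gives
\begin{equation*}
{\rm var}(h({\bf X}))\le \sum_{i=1}^{n}\mathbb{E}\left[\nabla_{{\bf x}_i}h({\bf X})^{T}{\bm \Sigma}\,\nabla_{{\bf x}_i}h({\bf X})\right]\le \|{\bm \Sigma}\|\sum_{i=1}^{n}\sum_{j=1}^{p}\mathbb{E}\left[\left|\frac{\partial h}{\partial x_{ij}}\right|^{2}\right],
\end{equation*}
so it is enough to show that this double sum is $O(n^{-1+\delta})$ for every $\delta>0$.

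First I would compute $\partial h/\partial x_{ij}$ by the product rule. Writing ${\bf u}=\stackrel{\circ}{\bf f}({\bf X})$ and ${\bf B}=\widetilde{\bf Q}_z{\bf A}\widetilde{\bf Q}_z$, a symmetric matrix with $\|{\bf B}\|=O(1)$ on the event that $z$ remains at macroscopic distance from the limiting support of $\frac{1}{p}{\bf X}^{T}{\bf PX}$, the derivative splits into three families: (i) the derivative hitting ${\bf u}$, which produces the factor $\left[\nabla f({\bf x}_i)\right]_j$ times $\left[{\bf X}{\bf B}{\bf X}^{T}{\bf u}\right]_i$; (ii) the derivative hitting one of the two explicit factors ${\bf X}$, which produces $u_i\left[{\bf B}{\bf X}^{T}{\bf u}\right]_j$; and (iii) the derivative hitting one of the two copies of $\widetilde{\bf Q}_z$ inside ${\bf B}$, which by the differentiation formula \eqref{eq:differentiation_formula} produces terms of the schematic shape $\frac{1}{p}\left[\widetilde{\bf Q}_z\cdots{\bf X}^{T}{\bf u}\right]_j\left[{\bf PX}\widetilde{\bf Q}_z\cdots{\bf X}^{T}{\bf u}\right]_i$, the dots standing for products of bounded matrices. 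Each term carries the prefactor $1/p^{2}$, and those in family (iii) an extra $1/p$.

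Then I would bound the associated squared sums term by term, using the recurring ingredients: $\|\widetilde{\bf Q}_z\|=O(1)$ and $\|{\bf X}\|=O(\sqrt n)$ on a high-probability event; $\|\frac{1}{p}{\bf X}^{T}{\bf u}\|=\|\frac{1}{p}{\bf X}^{T}\stackrel{\circ}{\bf f}({\bf X})\|=O(1)$, which follows from $\mathbb{E}\left[{\bf x}_i\stackrel{\circ}{f}({\bf x}_i)\right]={\bm \Sigma}\,\mathbb{E}\left[\nabla f({\bf x}_i)\right]$ (Theorem \ref{th:integration_part}, whose right-hand side is bounded by \eqref{bounded_grad}) plus an $O(n^{-1/2})$ fluctuation; $\|{\bf u}\|^{2}=\|\stackrel{\circ}{\bf f}({\bf X})\|^{2}=O(n)$; and $\sum_j\left[\nabla f({\bf x}_i)\right]_j^{2}=\|\nabla f({\bf x}_i)\|^{2}$, which has finite moments of every order by Assumption \ref{assumption3}. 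For instance family (ii) contributes $\frac{1}{p^{4}}\|{\bf u}\|^{2}\|{\bf B}{\bf X}^{T}{\bf u}\|^{2}=O(p^{-4})\,O(n)\,O(p^{2})=O(n^{-1})$; families (i) and (iii) are handled analogously, the extra $1/p$ of family (iii) absorbing the extra $\|{\bf X}\|^{2}=O(n)$. Turning these high-probability counts into genuine expectations requires H\"older's inequality together with the all-order moment bounds on $f$, $\nabla f$, $\|{\bf X}\|$ and the resolvent, and the super-polynomial smallness of the bad-norm events; this trade is exactly what costs the arbitrarily small factor $n^{\delta}$ and prevents a clean $O(n^{-1})$ rate.

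The main obstacle I anticipate is bookkeeping rather than conceptual: tracking the extra factors of ${\bf X}$ that $\partial\widetilde{\bf Q}_z/\partial x_{ij}$ reinjects and controlling the resulting bilinear forms uniformly in $(i,j)$ while staying in the regime where $\widetilde{\bf Q}_z$ is bounded. Once that is organized, the estimate reduces to a power count in $n$ and $p$ via $p/n\to c$ and the moment hypotheses of Assumption \ref{assumption3}.
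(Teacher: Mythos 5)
Your proposal follows essentially the same route as the paper, whose proof of this lemma is exactly the one-line combination of the Nash--Poincar\'e inequality with the resolvent differentiation formula \eqref{eq:differentiation_formula} (all further details are omitted there "for brevity"). Your fleshed-out derivative bookkeeping and power counting via $\|\widetilde{\bf Q}_z\|=O(1)$, $\|{\bf X}\|=O(\sqrt n)$, $\|\frac{1}{p}{\bf X}^{T}\stackrel{\circ}{\bf f}({\bf X})\|=O(1)$ and the moment hypotheses of Assumption \ref{assumption3} is consistent with, and more explicit than, the paper's argument.
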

			
			\begin{proof}
				The proof of Lemma \ref{lemma:very_technical} follows from the Nash-Poincar\'e inequality and the differentiation formula in \eqref{eq:differentiation_formula}. Therefore, we omit technical details for brievity. 
			\end{proof}	
				As per Lemma \ref{lemma:very_technical}, $Z_1$ can be approximated as
				$$
				Z_1= \mathbb{E}\left[\frac{1}{p^2}\stackrel{\circ}{\bf f}({\bf X})^{T}{\bf X}\widetilde{\bf Q}_z{\bm \Sigma}\widetilde{\bf Q}_z{\bf X}^{T} \stackrel{\circ}{\bf f }({\bf X})\right]+O_p(n^{-\frac{1}{2}+\delta}),
				$$
				for any $\delta >0$. Now let $\overline{\bf x}=\frac{1}{\sqrt{n}}{\bf X}^{T}{\bf 1}$. The resolvent matrix $\widetilde{\bf Q}_z$ can be expressed as
				$$
				\widetilde{\bf Q}_z=\left(\frac{1}{p}{\bf X}^{T}{\bf X}-\frac{1}{p}\overline{\bf x}\overline{\bf x}^{T}-z{\bf I}_p\right)^{-1}
				.$$
				By the inversion Lemma, matrix $
				\widetilde{\bf Q}_z$ can be written as
				$$
				\widetilde{\bf Q}_z=\overline{\bf Q}_z +\frac{1}{p} \frac{\overline{\bf Q}_z\overline{\bf x}\overline{\bf x}^{T}\overline{\bf Q}_z}{1-\frac{1}{p}\overline{\bf x}^{T}\overline{\bf Q}_z\overline{\bf x}}
				,$$
				where $\overline{\bf Q}_z=\left(\frac{1}{p}{\bf X}^{T}{\bf X}-z{\bf I}_p\right)^{-1}$.
				Hence $Z_1$ can be expanded as
				\begin{align*}
				Z_1&=\frac{1}{p^2}\mathbb{E}\left[\stackrel{\circ}{\bf f}({\bf X})^{T}{\bf X}\overline{\bf Q}_z{\bm \Sigma} \overline{\bf Q}_z{\bf X}^{T}\stackrel{\circ}{\bf f}({\bf X})\right]\\
				&+\frac{1}{p^3}\mathbb{E}\left[\stackrel{\circ}{\bf f}({\bf X})^{T}{\bf X}\overline{\bf Q}_z\overline{\bf x}\overline{\bf x}\overline{\bf Q}_z{\bm \Sigma} \overline{\bf Q}_z{\bf X}^{T}\stackrel{\circ}{\bf f}({\bf X})\left(1-\frac{1}{p}\overline{\bf x}^{T}\overline{\bf Q}_z\overline{\bf x}\right)^{-1}\right]\\
				&+\frac{1}{p^3}\mathbb{E}\left[\stackrel{\circ}{\bf f}({\bf X})^{T}{\bf X}\overline{\bf Q}_z{\bm \Sigma} \overline{\bf Q}_z\overline{\bf x}\overline{\bf x}^{T}{\bf Q}_z{\bf X}^{T}\stackrel{\circ}{\bf f}({\bf X})\left(1-\frac{1}{p}\overline{\bf x}^{T}\overline{\bf Q}_z\overline{\bf x}\right)^{-1}\right]\\
				&+\frac{1}{p^4}\mathbb{E} \Biggl[\stackrel{\circ}{\bf f}({\bf X})^{T}{\bf X}\overline{\bf Q}_z\overline{\bf x}\overline{\bf x}^{T}\overline{\bf Q}_z{\bm \Sigma} \overline{\bf Q}_z\overline{\bf x}\overline{\bf x}^{T}\overline{\bf Q}_z{\bf X}^{T}\stackrel{\circ}{\bf f}({\bf X}) \\ & \times \left(1-\frac{1}{p}\overline{\bf x}^{T}\overline{\bf Q}_z\overline{\bf x}\right)^{-2}\Biggr].
				\end{align*}
				We can show that the last three terms are $O(n^{-\frac{1}{2}+\delta})$. To illustrate this, we will focus on the second term, as the derivations are similar for the remaining quantities. By Cauchy-Schwartz inequality, we have
				\begin{align*}
				&\mathbb{E}\frac{1}{p^3}\left[\stackrel{\circ}{\bf f}({\bf X})^{T}{\bf X}\overline{\bf Q}_z\overline{\bf x}\overline{\bf x}\overline{\bf Q}_z{\bm \Sigma} \overline{\bf Q}_z{\bf X}^{T}\stackrel{\circ}{\bf f}({\bf X})\left(1-\frac{1}{p}\overline{\bf x}^{T}\overline{\bf Q}_z\overline{\bf x}\right)^{-1}\right]\\
				&\leq \sqrt{\mathbb{E}\left|\frac{1}{p^{\frac{3}{2}}}\stackrel{\circ}{\bf f}({\bf X})^{T}{\bf X}\overline{\bf Q}_z{\bf X}^{T}\frac{\bf 1}{\sqrt{n}}\right|^2}\sqrt{\mathbb{E}\left|\frac{1}{p^{\frac{3}{2}}}\frac{{\bf 1}^{T}}{\sqrt{n}}{\bf X}\overline{\bf Q}_z{\bm \Sigma }\overline{\bf Q}_z{\bf X}^{T}\stackrel{\circ}{\bf f}({\bf X})\right|^2}.
				\end{align*}
				To treat the above bound, we first show that
				$$
				\mathbb{E}\left|\frac{1}{p^{\frac{3}{2}}}\frac{{\bf 1}^{T}}{\sqrt{n}}{\bf X}\overline{\bf Q}_z{\bf X}^{T}\stackrel{\circ}{\bf f}({\bf X})\right|^2=O(n^{-1+\delta}),
				$$
				for any $\delta >0$. Towards this end, we need the following control of the variance of  $\frac{1}{p^{\frac{3}{2}}}\frac{{\bf 1}^{T}}{\sqrt{n}}{\bf X}\overline{\bf Q}_z{\bm \Sigma }\overline{\bf Q}_z{\bf X}^{T}\stackrel{\circ}{\bf f}({\bf X})$ which we can be shown to be $O(n^{-1+\delta})$.
				\begin{lemma}
					Let $g$ satisfy Assumption 3 and $z\in\mathbb{C}\backslash\mathbb{R}_{+}$. Consider the following function.
					$$
					{ g}({\bf X}) =\frac{1}{p^{\frac{3}{2}}}\frac{{\bf 1}^{T}}{\sqrt{n}}{\bf X}\overline{\bf Q}_z{\bf X}^{T}\stackrel{\circ}{\bf f}({\bf X}).
					$$
					Then, for any $\delta >0$
					$
					{\rm var}(g({\bf X}))=O(n^{-1+\delta}). 
					$
					\label{lemma:control_variance}
					\end{lemma}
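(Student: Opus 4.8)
The plan is to bound the variance via the Nash--Poincar\'e inequality, in the same spirit as the proof of Lemma \ref{lemma:very_technical}. Regard ${\bf X}$ as the Gaussian vector ${\rm vec}({\bf X}^{T})\sim\mathcal{N}({\bf 0},{\bf I}_n\otimes{\bm\Sigma})$ (its rows ${\bf x}_1,\dots,{\bf x}_n$ being independent $\mathcal{N}({\bf 0}_p,{\bm\Sigma})$); since $f\in\mathcal{C}^{1}$ by Assumption \ref{assumption3} and $\overline{\bf Q}_z$ is a smooth function of ${\bf X}$ for $z\in\mathbb{C}\backslash\mathbb{R}_{+}$, the scalar $g({\bf X})$ is continuously differentiable in ${\bf X}$. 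Writing $\nabla_{{\bf x}_i}g$ for its gradient in the block ${\bf x}_i$, the Nash--Poincar\'e inequality gives
\begin{equation*}
{\rm var}(g({\bf X}))\le\sum_{i=1}^{n}\mathbb{E}\left[\nabla_{{\bf x}_i}g({\bf X})^{T}{\bm\Sigma}\,\nabla_{{\bf x}_i}g({\bf X})\right],
\end{equation*}
and, $\|{\bm\Sigma}\|$ being $O(1)$ by Assumption \ref{assumption1}, it is enough to show $\sum_{i=1}^{n}\mathbb{E}\,\|\nabla_{{\bf x}_i}g({\bf X})\|_{2}^{2}=O(n^{-1+\delta})$ for every $\delta>0$.

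Next I would expand $\partial g/\partial x_{ij}$ by the product rule. The derivative hits four constituents of $g=\frac{1}{p^{3/2}\sqrt n}{\bf 1}^{T}{\bf X}\overline{\bf Q}_z{\bf X}^{T}\stackrel{\circ}{\bf f}({\bf X})$: the left factor ${\bf X}$, the right factor ${\bf X}^{T}$, the resolvent $\overline{\bf Q}_z$ — differentiated using the formula \eqref{eq:differentiation_formula} with ${\bf P}$ replaced by ${\bf I}_n$, which introduces an extra $\overline{\bf Q}_z$ and an extra $\frac1p{\bf X}$ or $\frac1p{\bf X}^{T}$ factor — and the vector $\stackrel{\circ}{\bf f}({\bf X})$, which produces the coordinate $[\nabla f({\bf x}_i)]_{j}$ (the additive constant $\mathbb{E}_{\bm x}f({\bm x})$ dropping out under differentiation). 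Collecting the index $j$, $\nabla_{{\bf x}_i}g$ becomes a sum of finitely many $p$-vectors; I would then bound the Euclidean norm of each one, summed over $i$, using the deterministic bound $\|\overline{\bf Q}_z\|\le 1/{\rm dist}(z,\mathbb{R}_{+})$, the operator-norm bound $\|{\bf X}\|=O_p(\sqrt n)$ together with control of all of its moments, $\|{\bf X}^{T}{\bf 1}\|_{2}=O_p(n)$, and $\|\stackrel{\circ}{\bf f}({\bf X})\|_{2}=O_p(\sqrt n)$ (a consequence of $f$ being polynomially bounded with finite moments, Assumption \ref{assumption3}); expectations of products of such quantities are handled by Cauchy--Schwarz.

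A quick scaling count shows that all of these pieces are $O(n^{-1})$ — for instance the one where the derivative hits the left ${\bf X}$ gives, after summing over $i$, a term of the form $\frac{1}{p^{3}}\|\overline{\bf Q}_z{\bf X}^{T}\stackrel{\circ}{\bf f}({\bf X})\|_{2}^{2}=O(n^{-1})$ (using $p\asymp n$), and the $\overline{\bf Q}_z$ pieces are no larger because the extra $\frac1p{\bf X}$ factors are compensated by the trace-like sums over $i$ — except for the piece where the derivative hits $\stackrel{\circ}{\bf f}$. That piece contributes $\frac{1}{p^{3}n}\sum_{i=1}^{n}\big[{\bf 1}^{T}{\bf X}\overline{\bf Q}_z{\bf X}^{T}\big]_{i}^{2}\,\|\nabla f({\bf x}_i)\|_{2}^{2}$, which I would bound by $\frac{1}{p^{3}n}\big(\max_{i\le n}\|\nabla f({\bf x}_i)\|_{2}^{2}\big)\,\|{\bf 1}^{T}{\bf X}\overline{\bf Q}_z{\bf X}^{T}\|_{2}^{2}$. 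Here $\|{\bf 1}^{T}{\bf X}\overline{\bf Q}_z{\bf X}^{T}\|_{2}^{2}=O_p(n^{3})$, whereas $\max_{i\le n}\|\nabla f({\bf x}_i)\|_{2}^{2}=O_p(n^{\delta})$ for every $\delta>0$ — a union bound over the $n$ indices combined with the finite-moment bound $\mathbb{E}\|\nabla f\|_{2}^{k}<\infty$ of Assumption \ref{assumption3}, and the moment version $\mathbb{E}\big[\max_{i\le n}\|\nabla f({\bf x}_i)\|_{2}^{2m}\big]=O(n^{\delta})$ following by the same argument. Hence this piece is $O(n^{-1+\delta})$, and summing the finitely many contributions gives ${\rm var}(g({\bf X}))=O(n^{-1+\delta})$.

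The hard part is bookkeeping rather than any conceptual hurdle: one has to enumerate all the product-rule pieces of $\nabla_{{\bf x}_i}g$ (including cross terms), check that each is $O(n^{-1+\delta})$ after summation over $i$, and — the single genuinely delicate point — estimate $\mathbb{E}\big[\nabla_{{\bf x}_i}g^{T}{\bm\Sigma}\nabla_{{\bf x}_i}g\big]$ even though ${\bf x}_i$ appears simultaneously in ${\bf X}$, in $\overline{\bf Q}_z$ and in $\nabla f({\bf x}_i)$. As for $Z_1$ above, a leave-one-out update that removes ${\bf x}_i$ from $\overline{\bf Q}_z$ decouples these dependencies up to lower-order corrections, and taking high enough moments in the Markov step absorbs the residual $n^{\delta}$; this is presumably why, exactly as for Lemma \ref{lemma:very_technical}, the statement is recorded with the technical details omitted.
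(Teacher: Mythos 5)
Your proposal is correct and follows essentially the same route the paper indicates: the paper's (omitted) proof is precisely an application of the Nash--Poincar\'e inequality together with the resolvent differentiation formula \eqref{eq:differentiation_formula}, and your term-by-term scaling count (each product-rule piece $O(n^{-1})$, with the $\nabla f$ piece costing the extra $n^{\delta}$) supplies the bookkeeping the paper leaves out.
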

					The proof of Lemma \ref{lemma:control_variance} follows the same lines as Lemma \ref{lemma:very_technical} and is thus omitted. With the control of the variance at hand, it suffices to show the following.
					\begin{equation}
					\mathbb{E}\left[g({\bf X})\right]=O(n^{-\frac{1}{2}+\delta}).
					\label{eq:control_mean}
					\end{equation}
						Let $\overline{\bf Q}_{z,i}=\left(\sum_{k=1, k\notin\left\{i\right\}}^n \frac{1}{p}{\bf x}_k{\bf x}_k^{T}-z{\bf I}_p\right)^{-1}$. 
					We thus develop $g({\bf X})$ as
					\begin{align*}
					&\frac{1}{\sqrt{n}}\frac{1}{p^{\frac{3}{2}}}\sum_{i=1}^n\sum_{j=1}^n \mathbb{E}\left[{\bf x}_i^{T}\overline{\bf Q}_z{\bf x}_j \stackrel{\circ}{\bf f}({\bf x}_j)\right]\\
					&=\frac{1}{\sqrt{n}}\frac{1}{p^{\frac{3}{2}}}\sum_{i=1}^n\sum_{j=1,i\neq j}^n \mathbb{E}\left[ \frac{{\bf x}_i^{T}\overline{\bf Q}_{z,i}{\bf x}_j \stackrel{\circ}{\bf f}({\bf x}_j)}{1+\frac{1}{p}{\bf x}_i^{T}\overline{\bf Q}_{z,i}{\bf x}_i}\right]\\
					&+\frac{1}{\sqrt{n}}\frac{1}{p^{\frac{3}{2}}}\sum_{i=1}^n \mathbb{E}\left[\frac{{\bf x}_i^{T}\overline{\bf Q}_{z,i}{\bf x}_i \stackrel{\circ}{\bf f}({\bf x}_i)}{1+\frac{1}{p}{\bf x}_i^{T}\overline{\bf Q}_{z,i}{\bf x}_i}\right]\\
					&\stackrel{(a)}{=}\frac{1}{\sqrt{n}}\frac{1}{p^{\frac{3}{2}}} \sum_{i=1}^n\sum_{j=1}^n \mathbb{E}{\bf x}_i^{T}\overline{\bf Q}_{z,i}{\bf x}_j \stackrel{\circ}{\bf f}({\bf x}_j) \\ & \times \left[\left(1+\frac{1}{p}{\bf x}_i^{T}\overline{\bf Q}_{z,i}{\bf x}_i\right)^{-1}-\left(1+\frac{1}{p}\tr {\bm \Sigma}\overline{\bf Q}_{z,i}\right)^{-1}\right] \\ & +O(n^{-\frac{1}{2}})\\
					& =  \frac{1}{\sqrt{n}p^{\frac{3}{2}}}\mathbb{E}\stackrel{\circ}{\bf f}({\bf X})^{T}{\bf X}\overline{\bf Q} \\ & \times \diag\left\{\frac{1+\frac{1}{p}{\bf x}_i^{T}\overline{\bf Q}{\bf x}_i}{1+\frac{1}{p}{\bf x}_i^{T}\overline{\bf Q}_{z,i}{\bf x}_i} - \frac{1+\frac{1}{p}{\bf x}_i^{T}\overline{\bf Q}{\bf x}_i}{1+\frac{1}{p}\tr {\bm \Sigma}\overline{\bf Q}_{z,i}}\right\}_{i=1}^n \\ & \times{\bf X}^{T}{\bf 1} +O(n^{-\frac{1}{2}}), 
					\end{align*}
					where in (a), we use the fact that when $i\neq j$, $\mathbb{E}\left[{\bf x}_i^{T}\overline{\bf Q}_{z,i}{\bf x}_j \stackrel{\circ}{\bf f}({\bf x}_j)\left(1+\frac{1}{p}\tr {\bm \Sigma}\overline{\bf Q}_{z,i}\right)^{-1}\right]=0$. Using the fact that
					\begin{align*}
				& 	\left \|\diag\left\{\frac{1+\frac{1}{p}{\bf x}_i^{T}\overline{\bf Q}{\bf x}_i}{1+\frac{1}{p}{\bf x}_i^{T}\overline{\bf Q}_{z,i}{\bf x}_i} - \frac{1+\frac{1}{p}{\bf x}_i^{T}\overline{\bf Q}{\bf x}_i}{1+\frac{1}{p}\tr {\bm \Sigma}\overline{\bf Q}_{z,i}}\right\}_{i=1}^n \right \| \\ & = \textcolor{black}{O_p(n^{-\frac{1}{2}+\delta})}.
					\end{align*}
					we conclude that
					$$
					Z_1= \mathbb{E}\left[\frac{1}{p^2}\stackrel{\circ}{\bf f}({\bf X})^{T}{\bf X}\overline{\bf Q}_z{\bm \Sigma}\overline{\bf Q}_z{\bf X}^{T} \stackrel{\circ}{\bf f }({\bf X})\right]+O_p(n^{-\frac{1}{2}+\delta}).
					$$
					From the inversion lemma, we get 
					$$
					\overline{\bf Q}_z{\bf x}_i=\frac{\overline{\bf Q}_{z,i}{\bf x}_i}{1+\frac{1}{p}{\bf x}_i^{T}\overline{\bf Q}_{z,i}{\bf x}_i}. 
					$$
					$Z_1$ writes as
					\begin{align*}
					Z_1 & = \sum_{i=1}^n\sum_{j=1}^n\frac{1}{p^2}\mathbb{E}\left[\frac{\stackrel{\circ}{f}({\bf x}_i){\bf x}_i^{T}\overline{\bf Q}_{z,i} {\bm \Sigma}\overline{\bf Q}_{z,j}{\bf x}_j \stackrel{\circ}{f}({\bf x}_j)}{(1+\frac{1}{p}{\bf x}_i^{T}\overline{\bf Q}_{z,i}{\bf x}_i)(1+\frac{1}{p}{\bf x}_j^{T}\overline{\bf Q}_{z,j}{\bf x}_j)}\right] \\ & +O_p(n^{-\frac{1}{2}+\delta}).									
					\end{align*}
					Using the fact that
					$$
					\left|\frac{1}{p}{\bf x}_i^{T}\overline{\bf Q}_{z,i}{\bf x}_i+\frac{1}{pz}\tr {\bm \Sigma}({\bf I}_p+m_z{\bm \Sigma})^{-1}\right|^k=O(n^{-\frac{1}{2}}). 
					$$
					Along with the relation
					$$
					1-\frac{1}{pz}\tr {\bm \Sigma}({\bf I}_p+m_z{\bm \Sigma})^{-1}=-zcm_z, 
					$$
					we obtain
					\begin{align}
					Z_1&=c^2z^2 m_z^2 \sum_{i=1}^n\sum_{j=1}^n\frac{1}{p^2}\mathbb{E}\stackrel{\circ}{f}({\bf x}_i){\bf x}_i^{T}\overline{\bf Q}_{z,i} {\bm \Sigma}\overline{\bf Q}_{z,j}{\bf x}_j \stackrel{\circ}{f}({\bf x}_j)\nonumber
					\\
					&=c^2z^2 m_z^2 \sum_{i=1}^n\sum_{j=1,j\neq i}^n\frac{1}{p^2}\mathbb{E}\stackrel{\circ}{f}({\bf x}_i){\bf x}_i^{T}\overline{\bf Q}_{z,i} {\bm \Sigma}\overline{\bf Q}_{z,j}{\bf x}_j \stackrel{\circ}{f}({\bf x}_j)\nonumber\\
					&+c^2z^2m_z^2\frac{1}{p^2}\sum_{i=1}^n \mathbb{E} \left[\left|\stackrel{\circ}{f}({\bf x}_i)\right|^2{\bf x}_i^{T}\overline{\bf Q}_{z,i}{\bm \Sigma}\overline{\bf Q}_{z,i}{\bf x}_i\right]+O_p(n^{-\frac{1}{2}+\delta}). 
					\label{eq:Z1}
					\end{align}
					Now, we further proceed resorting to the inversion lemma
					$$
					\overline{\bf Q}_{z,i}=\overline{\bf Q}_{z,ij} - \frac{1}{p}\frac{\overline{\bf Q}_{z,ij}{\bf x}_j{\bf x}_j^{T}\overline{\bf Q}_{z,ij}}{1+\frac{1}{p}{\bf x}_j^{T}\overline{\bf Q}_{z,ij}{\bf x}_j}, 
					$$
					which once plugged into \eqref{eq:Z1} yields
					\begin{align*}
					Z_1&=c^2z^2 m_z^{2}\sum_{i=1}^n\sum_{j=1}^n \mathbb{E}\left[\frac{1}{p^2} \stackrel{\circ}{f}({\bf x}_i){\bf x}_i^{T}\overline{\bf Q}_{z,ij} {\bm \Sigma}\overline{\bf Q}_{z,ij}{\bf x}_j \stackrel{\circ}{f}({\bf x}_j)\right]\\
					&-c^2z^2 m_z^{2}\sum_{i=1}^n\sum_{j=1,j\neq i}^n \mathbb{E} \Biggl[\frac{1}{p^3}\frac{ \stackrel{\circ}{f}({\bf x}_i){\bf x}_i^{T}\overline{\bf Q}_{z,ij}{{\bf x}_j{\bf x}_j^{T}}\overline{\bf Q}_{z,ij}{\bm \Sigma}\overline{\bf Q}_{z,ij}}{1+\frac{1}{p}{\bf x}_j^{T}\overline{\bf Q}_{z,ij}{\bf x}_j}\\ & \times {\bf x}_j \stackrel{\circ}{f}({\bf x}_j) \Biggr]
					-c^2z^2 m_z^{2}\sum_{i=1}^n\sum_{j=1,j\neq i}^n \mathbb{E} \Biggl[\frac{1}{p^3} \frac{\stackrel{\circ}{f}({\bf x}_i){\bf x}_i^{T}\overline{\bf Q}_{z,ij} {\bm \Sigma}}{1+\frac{1}{p}{\bf x}_i^{T}\overline{\bf Q}_{z,ij}{\bf x}_i}\\ & \times \overline{\bf Q}_{z,ij}{\bf x}_i{\bf x}_i^{T}\overline{\bf Q}_{z,ij}{\bf x}_j \stackrel{\circ}{f}({\bf x}_j) \Biggr] 
	 \\ & +c^2z^2 m_z^{2}\sum_{i=1}^n\sum_{j=1,j\neq i}^n\mathbb{E} \Biggl[\frac{1}{p^4}\frac{ \stackrel{\circ}{f}({\bf x}_i){\bf x}_i^{T}\overline{\bf Q}_{z,ij} {\bf x}_j{\bf x}_j^{T}\overline{\bf Q}_{z,ij} {\bm \Sigma}\overline{\bf Q}_{z,ij}}{(1+\frac{1}{p}{\bf x}_i^{T}\overline{\bf Q}_{z,ij}{\bf x}_i)(1+\frac{1}{p}{\bf x}_j^{T}\overline{\bf Q}_{z,ij}{\bf
					x}_j)} \\ & \times {\bf x}_i{\bf x}_i^{T}\overline{\bf Q}_{z,ij}{\bf x}_j \stackrel{\circ}{f}({\bf x}_j) \Biggr] \\ 
					&\triangleq Z_{11}+Z_{12}+Z_{13}+Z_{14} +O_p(n^{-\frac{1}{2}+\delta}). 
							\end{align*}
							Let us first control $Z_{11}$. Taking the expectation over ${\bf x}_i$ and ${\bf x}_j$, we obtain
							\begin{align*}
							Z_{11}&=c^2 z^2 m_z^2\frac{1}{p^2}\sum_{i=1}^n \sum_{j=1,j\neq i}^n \mathbb{E}\left[\nabla f\right]^{T} \mathbf{\Sigma}\mathbb{E}\left[\overline{\bf Q}_{z,ij}{\bm \Sigma}\overline{\bf Q}_{z,ij}\right] \mathbf{\Sigma} \mathbb{E}\left[\nabla f\right] 
							\\ &+c^2 z^2m_z^2\frac{1}{p^2}\sum_{i=1}^n \mathbb{E}\left[\left|\stackrel{\circ}{f}({\bf x}_i)\right|^2 {\bf x}_i^{T}\overline{\bf Q}_{z,ij}\boldsymbol{\Sigma}\overline{\bf Q}_{z,ij}{\bf x}_i\right]. 
							\end{align*}
							It can be shown that for a vector ${\bf a}$ with uniformly bounded norm that
							$$
							\mathbb{E}\left[{\bf a}^{T}\overline{\bf Q}_{z,ij}{\bm \Sigma}\overline{\bf Q}_{z,ij}{\bf b}\right]= \mathbb{E}\left[{\bf a}^{T}\overline{\bf Q}_{z}{\bm \Sigma}\overline{\bf Q}_{z}{\bf b}\right]+O(n^{-1}). 
							$$
							Using the fact that $\mathbb{E}\left[\left|\frac{1}{p}{\bf x}_i^{T}\overline{\bf Q}_{z,ij}\boldsymbol{\Sigma}\overline{\bf Q}_{z,ij}{\bf x}_i-\mathbb{E}\frac{1}{p}\tr {\bm \Sigma}{\bf Q}{\bm \Sigma}{\bf Q}\right|^2\right]=O(n^{-1})$, we thus obtain
							\begin{align*}
							Z_{11}&= c^2z^2m_z^2\frac{1}{p^2}\sum_{i=1}^n\sum_{j=1,j\neq i}^n \mathbb{E}\left[\nabla f\right]^{T} {\bm \Sigma}\mathbb{E}\left[\overline{\bf Q}_{z}{\bm \Sigma}\overline{\bf Q}_{z}\right]{\bm \Sigma} \mathbb{E}\left[\nabla f\right]\\
							&+c^2z^2m_z^2\frac{1}{p}\sum_{i=1}^n {\bf var}_f \mathbb{E}\left[\frac{1}{p}\tr {\bm \Sigma}{\bf Q}{\bm \Sigma}{\bf Q}\right] +O(n^{-\frac{1}{2}}). 
							\end{align*}
							By standard results from random matrix theory \cite{bilinear,malika}, we get
							\begin{align*}
							Z_{11}&=\frac{nm_z^2 \mathbb{E}\left[\nabla f \right]^{T}{\bm \Sigma}\left({\bf I}_p+m_z{\bm \Sigma}\right)^{-1}{\bm \Sigma}\left({\bf I}_p+m_z{\bm \Sigma}\right)^{-1}}{n-m_z^2\tr {\bm \Sigma}^2\left({\bf I}_p+m_z{\bm \Sigma}\right)^{-2}} \\ & \times {\bm \Sigma}\mathbb{E}\left[\nabla f\right]
							+{\bf var}_f \frac{m_z^2\tr {\bm \Sigma}^2\left({\bf I}_p+m_z {\bm \Sigma}\right)^{-2}}{n-m_z^2\tr {\bm \Sigma}^2\left({\bf I}_p+m_z{\bm \Sigma}\right)^{-2}} \\ & +O(n^{-\frac{1}{2}}). 
							\end{align*}
							Along the same arguments, we can show that
							\begin{align*}
							Z_{12}=Z_{13}& =\frac{m_z^3\tr {\bm \Sigma}^2\left({\bf I}_p+m_z{\bm \Sigma}\right)^{-2}}{n-m_z^2\tr {\bm \Sigma}^2\left({\bf I}_p+m_z{\bm \Sigma}\right)^{-2}} \\ & \times \mathbb{E}\left[\nabla f\right]^T {\bm \Sigma} \left({\bf I}_p+m_z{\bm \Sigma}\right)^{-1}{\bm \Sigma}\mathbb{E}\left[\nabla f\right] +O(n^{-\frac{1}{2}}). 
							\end{align*}
							As for $Z_{14}$, using H\"{o}lder's inequality, it can be bounded as
							\begin{align*}
							& \left|Z_{14}\right| \\  &\leq \frac{c^2z^2m_z^2}{p^2\sqrt{p}} \sum_{i=1}\sum_{j=1,j\neq i} \left(\mathbb{E}\left|\frac{1}{\sqrt{p}}{\bf x}_i^{T}\overline{\bf Q}_{z,ij}{\bf x}_j\right|^4\right)^{\frac{1}{4}} \\ & \times \left(\mathbb{E}\left|\frac{1}{\sqrt{p}}{\bf x}_j^{T}\overline{\bf Q}_{z,ij}{\bm \Sigma}\overline{\bf Q}_{z,ij}{\bf
							x}_i\right|^4\right)^{\frac{1}{4}}  \left(\mathbb{E}\left|\frac{1}{\sqrt{p}}{\bf x}_i^{T}\overline{\bf Q}_{z,ij}{\bf x}_j\right|^4\right)^{\frac{1}{4}} \\ & \times 
							\left(\mathbb{E}\left|\stackrel{\circ}f({\bf x}_i)\stackrel{\circ}f({\bf x}_j)\right|^4\right)^{\frac{1}{4}} \\ 
							& =O(n^{-\frac{1}{2}}) .
								\end{align*}
								We thus conclude that
								\begin{align*}
								Z_1&={\bf var}_f \frac{m_z^2\tr {\bm \Sigma}^2({\bf I}_p+m_z {\bm \Sigma})^{-2}}{n-m_z^2\tr {\bm \Sigma}^2({\bf I}_p+m_z{\bm \Sigma})^{-2}}\\
								&+\frac{nm_z^2 \mathbb{E}\left[\nabla f\right]^{T} {\bm \Sigma}({\bf I}_p+m_z{\bf \Sigma})^{-1}{\bm \Sigma}({\bf I}_p+m_z{\bf \Sigma})^{-1}{\bm \Sigma}\mathbb{E}\left[\nabla f\right]}{n-m_z^2\tr {\bm \Sigma}^2({\bf I}_p+m_z{\bm \Sigma})^{-2}}\\
								&-\frac{2m_z^3\tr {\bm \Sigma}^2({\bf I}_p+m_z{\bm \Sigma})^{-2}}{n-m_z^2\tr {\bm \Sigma}^2({\bf I}_p+m_z{\bm \Sigma})^{-2}} \mathbb{E}\left[\nabla f\right]^{T}{\bm \Sigma} ({\bf I}_p+m_z{\bf \Sigma})^{-1} \\ & \times {\bm \Sigma}\mathbb{E}\left[\nabla f\right]+O_p(n^{-\frac{1}{2}+\delta}). 
								\end{align*}
								Now, we will treat the term $Z_2$. Similarly to $Z_1$, we can show that
								$$
								Z_2= \frac{1}{np^2}{\bf f}({\bf X})^{T}{\bf P}{\bf X}\widetilde{\bf Q}_z {\bf X}^{T} \frac{{\bf 1}{\bf 1}^{T}}{n}{\bf X}\widetilde{\bf Q}_z{\bf X}^{T}{\bf P}{\bf f}({\bf X})+O_p(n^{-\frac{1}{2}}). 
								$$
								Using Lemma \ref{lemma:control_variance} along with \eqref{eq:control_mean}, we thus obtain
								$$
								Z_2=O_p(n^{-\frac{1}{2}}).
								$$
								We have thus completed the treatment of the first term of the bias and shown that
								\begin{align*}
								&\mathbb{E}_{\bm s} \left[{\bf f}({\bf X})^{T}{\bf P}\left({\bf K}_c+\lambda {\bf I}_n\right)^{-1}{\bm \kappa}_c({\bf s}){\bm \kappa}_c({\bf s})^{T}\left({\bf K}_c+\lambda {\bf I}_n\right)^{-1}{\bf P}{\bf f}({\bf X})\right] \\ & ={\bf var}_f \frac{m_z^2\tr {\bm \Sigma}^2({\bf I}_p+m_z {\bm \Sigma})^{-2}}{n-m_z^2\tr {\bm \Sigma}^2({\bf I}_p+m_z{\bm \Sigma})^{-2}}\\
								&+\frac{nm_z^2}{n-m_z^2\tr {\bm \Sigma}^2({\bf I}_p+m_z{\bm \Sigma})^{-2}}  \mathbb{E}\left[\nabla f\right]^{T} {\bm \Sigma}({\bf I}_p+m_z{\bf \Sigma})^{-1} \\ & \times {\bm \Sigma}({\bf I}_p+m_z{\bf \Sigma})^{-1}{\bm \Sigma}\mathbb{E}\left[\nabla f\right] -\frac{2m_z^3\tr {\bm \Sigma}^2({\bf I}_p+m_z{\bm \Sigma})^{-2}}{n-m_z^2\tr {\bm \Sigma}^2({\bf I}_p+m_z{\bm \Sigma})^{-2}} \\ & \times \mathbb{E}\left[\nabla f\right]^{T}{\bm \Sigma} ({\bf I}_p+m_z{\bf \Sigma})^{-1}{\bm \Sigma}\mathbb{E}\left[\nabla f\right]+O_p(n^{-\frac{1}{2}+\delta}). 
								\end{align*}
								The second term in the bias can be dealt with by noticing that
								$$
								\frac{1}{n}{\bf 1}^{T}f({\bf X}) = \mathbb{E}f({\bf x}) +O_p(\frac{1}{\sqrt{n}}). 
								$$
								Thus yielding
								$$
								\mathbb{E}_{\bm s}\left|\frac{1}{n}{\bf 1}^{T}{\bf f}({\bf X})-f({\bf s})\right|^2={\bf var}_f+O_p(n^{-\frac{1}{2}}). 
								$$
								We now move to the last term in the bias. Using \eqref{eq:kappa_c}, we obtain
								\begin{align*}
								& 2\mathbb{E}_{\bm s}\left[{\bf f}({\bf X})^{T}{\bf P}\left({\bf K}_c+\lambda {\bf I}_n\right)^{-1}\tilde{\bm\kappa}_c({\bm s})\left(\frac{1}{n}{\bf 1}^{T}{\bf f}({\bf X})-f({\bm s})\right)\right]\\
								& =2\mathbb{E}_{\bm s} \Biggl[ {\bf f}({\bf X})^{T}{\bf P}\left({\bf K}_c+\lambda {\bf I}_n\right)^{-1}\left\{g'(0)\frac{1}{p}{\bf PX}{\bm s}+{\bf P}\tilde{\bm \kappa}_c({\bm s})-\frac{1}{n}{\bf PK}{\bf 1}\right\} \\ & \times \left(\frac{1}{n}{\bf 1}^{T}{\bf f}({\bf X})-f({\bm s})\right) \Biggr]\\
								&=-2\mathbb{E}_{\bm s}\left[{\bf f}({\bf X})^{T}{\bf P}\left({\bf K}_c+\lambda {\bf I}_n\right)^{-1}g'(0)\frac{1}{p}{\bf PX}{\bm s}f({\bm s})\right]\\
								&+2\mathbb{E}_{\bm s}\left[{\bf f}({\bf X})^{T}{\bf P}\left({\bf K}_c+\lambda {\bf I}_n\right)^{-1}{\bf P}\tilde{\bm \kappa}_c({\bm s}){\bm s}\left(\frac{1}{n}{\bf 1}^{T}{\bf f}({\bf X})-f({\bm s})\right)\right]\\
								&-2\mathbb{E}_{\bm s}\left[{\bf f}({\bf X})^{T}{\bf P}\left({\bf K}_c+\lambda {\bf I}_n\right)^{-1}\frac{1}{n}{\bf PK}{\bf 1}\left(\frac{1}{n}{\bf 1}^{T}{\bf f}({\bf X})-f({\bm s})\right)\right]. 
								\end{align*}
								Since $\mathbb{E}_{\bm s}f({\bm s}) -\frac{1}{n}{\bf 1}^{T}{\bf f}({\bf X})=O_p(n^{-\frac{1}{2}})$, we can replace in the two last terms $\frac{1}{n}{\bf 1}^{T}{\bf f}({\bf X})$ by $\mathbb{E}_{\bm s}f({\bm s})$ with an error $O(n^{-\frac{1}{2}})$. In doing so, we obtain
								\begin{align*}
								& 2\mathbb{E}_{\bm s}\left[{\bf f}({\bf X})^{T}{\bf P}\left({\bf K}_c+\lambda {\bf I}_n\right)^{-1}\tilde{\bm\kappa}_c({\bm s})\left(\frac{1}{n}{\bf 1}^{T}{\bf f}({\bf X})-f({\bm s})\right)\right]\\
								&=-2{\bf f}({\bf X})^{T}{\bf P}\left({\bf K}_c+\lambda {\bf I}_n\right)^{-1}g'(0)\frac{1}{p}{\bf PX}{\bm \Sigma}\mathbb{E}\left[\nabla f\right]\\
								&+2\mathbb{E}_{\bm s}\left[{\bf f}({\bf X})^{T}{\bf P}\left({\bf K}_c+\lambda {\bf I}_n\right)^{-1}{\bf P}\tilde{\bm \kappa}_c({\bm s})\left(\mathbb{E}_{\bm s}f({\bf s})-f({\bm s})\right)\right] \\ & +O_p(n^{-\frac{1}{2}}). 
								\end{align*}
								Using \eqref{K_infty} along with standard calculations as those used in Lemma \ref{lemma:control_variance}, we obtain
								\begin{align*}
								&-2{\bf f}({\bf X})^{T}{\bf P}\left({\bf K}_c+\lambda {\bf I}_n\right)^{-1}g'(0)\frac{1}{p}{\bf PX}{\bm \Sigma}\mathbb{E}\left[\nabla f\right] \\ & = -2{\bf f}({\bf X})^{T}{\bf P}{\bf Q}_z\frac{1}{p}{\bf PX}{\bm \Sigma}\mathbb{E}\left[\nabla f\right]+O_p(n^{-\frac{1}{2}}) \\
								& =-\frac{2}{p}{\bf f}({\bf X})^{T}{\bf P}{\bf X}\widetilde{\bf Q}_z{\bm \Sigma}\mathbb{E}\nabla f +O_p(n^{-\frac{1}{2}}).
								\end{align*}
								Replacing $\widetilde{\bf Q}_z$ by $\overline{\bf Q}_z$, and using similar derivations as before, we obtain
								\begin{align*}
								& -2{\bf f}({\bf X})^{T}{\bf P}\left({\bf K}_c+\lambda {\bf I}_n\right)^{-1}g'(0)\frac{1}{p}{\bf PX}{\bm \Sigma}\mathbb{E}\left[\nabla f\right]  \\  &=-2m_z \mathbb{E}\left[\nabla f\right]^{T}{\bm \Sigma}\left({\bf I}_p+m_z{\bm \Sigma}\right)^{-1}{\bm \Sigma}\mathbb{E}\left[\nabla f\right]   +O_p(n^{-\frac{1}{2}+\delta}).
								\end{align*}
								Finally, using the same tools we can show that 
								$$2\mathbb{E}_{\bm s}\left[{\bf f}({\bf X})^{T}{\bf P}\left({\bf K}_c+\lambda {\bf I}_n\right)^{-1}{\bf P}\tilde{\bm \kappa}_c({\bm s})\left(\mathbb{E}_{\bm s}f({\bf s})-f({\bm s})\right)\right]$$ will not contribute to the expression of the bias. 
In fact, 						
								\begin{align*}
							& 	\frac{2}{p^2}\mathbb{E}_{\bm s}\left[{\bf f}({\bf X})^{T}{\bf P}\left({\bf K}_c+\lambda {\bf I}_n\right)^{-1}{\bf P}\tilde{\bm \kappa}_c({\bm s})\left(\mathbb{E}_{\bm s}f({\bf s})-f({\bm s})\right)\right] \\ & =O_p(n^{-\frac{1}{2}+\delta}). 
								\end{align*}
								We are now in position to estimate the bias term. Combining the results of all derivations, for any $\delta >0$ we obtain
								\begin{align*}
								B&=\frac{n {\bf var}_f}{n-m_z^2\tr \tr {\bm \Sigma}^2({\bf I}_p+m_z{\bm \Sigma})^{-2}}\\
								& -\frac{nm_z}{n-m_z^2\tr \tr {\bm \Sigma}^2({\bf I}_p+m_z{\bm \Sigma})^{-2}}\mathbb{E}\left[\nabla f\right]^{T}{\bm \Sigma}\left({\bf I}_p+m_z{\bm \Sigma}\right)^{-1} \\ & \times {\bm \Sigma}\mathbb{E}\left[\nabla f\right]
								-\frac{nm_z^2}{n-m_z^2\tr \tr {\bm \Sigma}^2({\bf I}_p+m_z{\bm \Sigma})^{-2}}\mathbb{E}\left[\nabla f\right]^{T}{\bm \Sigma} \\ & \times \left({\bf I}_p+m_z{\bm \Sigma}\right)^{-2}  \times {\bm \Sigma} \mathbb{E}\left[\nabla f\right]+O_p(n^{-\frac{1}{2}+\delta}). 
								\end{align*}
						This concludes the proof. 		
\section*{Appendix B}
\section*{Proof of Theorem \ref{consistent_estim} }		
The proof heavily relies on the same tools used earlier in the proof of Theorem \ref{theorem1} and on the following observations 
\begin{align*}
& \frac{1}{p^2} \bm{y}^T \mathbf{PX} \mathbf{\widetilde{Q}}_z \mathbf{X}^T \mathbf{P} \bm{y}  \\ & = \frac{1}{p^2} f\left(\mathbf{X}\right)^T \mathbf{PX} \mathbf{\widetilde{Q}}_z \mathbf{X}^T \mathbf{P} f\left(\mathbf{X}\right) + 	\frac{1}{p^2} \bm{\epsilon}^T \mathbf{PX} \mathbf{\widetilde{Q}}_z \mathbf{X}^T \mathbf{P} \bm{\epsilon}   \\ & + O_p\left(n^{-\frac{1}{2}}\right) \\ 
&= \frac{1}{p^2} f\left(\mathbf{X}\right)^T \mathbf{PX} \mathbf{\widetilde{Q}}_z \mathbf{X}^T \mathbf{P} f\left(\mathbf{X}\right) + \frac{\sigma^2}{p} \tr \frac{\mathbf{X}^T \mathbf{PX}}{p} \mathbf{\widetilde{Q}}_z \\ & +  O_p\left(n^{-\frac{1}{2}}\right)  \\ 
& =  \frac{1}{p^2} f\left(\mathbf{X}\right)^T \mathbf{PX} \mathbf{\widetilde{Q}}_z \mathbf{X}^T \mathbf{P} f\left(\mathbf{X}\right) + \sigma^2 + z \frac{\sigma^2}{p} \tr \mathbf{\widetilde{Q}}_z  \\ & +  O_p\left(n^{-\frac{1}{2}}\right) \\ 
& = \frac{1}{p^2} f\left(\mathbf{X}\right)^T \mathbf{PX} \mathbf{\widetilde{Q}}_z \mathbf{X}^T \mathbf{P} f\left(\mathbf{X}\right) + \sigma^2 - \frac{\sigma^2}{p} \tr \mathbf{\Sigma }\left(\mathbf{I}+ m_z\mathbf{\Sigma}\right)^{-1}  \\ & + O_p\left(n^{-\frac{1}{2}}\right) .
\end{align*}
It is also possible to show along the same lines as before that
\begin{align*}
& \frac{1}{p^2} f\left(\mathbf{X}\right)^T \mathbf{PX} \mathbf{\widetilde{Q}}_z \mathbf{X}^T \mathbf{P} f\left(\mathbf{X}\right) \\  &  = -z m_z^2 \E\left[\nabla_f\right]^T \mathbf{\Sigma}\left(\mathbf{I}+ m_z\mathbf{\Sigma}\right)^{-1} \mathbf{\Sigma} \E\left[\nabla_f\right]  \\ & + m_z \textbf{var}_f\frac{1}{p} \tr \mathbf{\Sigma }\left(\mathbf{I}+ m_z\mathbf{\Sigma}\right)^{-1}  + O_p\left(n^{-\frac{1}{2}+\delta}\right).
\end{align*}
Moreover, by computing 
\begin{align*}
& \frac{1}{p^2} f\left(\mathbf{X}\right)^T \mathbf{PX} \mathbf{\widetilde{Q}}^2_z \mathbf{X}^T \mathbf{P} f\left(\mathbf{X}\right) \\ & = \frac{\partial }{\partial z} \frac{1}{p^2} f\left(\mathbf{X}\right)^T \mathbf{PX} \mathbf{\widetilde{Q}}_z \mathbf{X}^T \mathbf{P} f\left(\mathbf{X}\right) \\
& =  \frac{\partial }{\partial z} \Biggl[-z m_z^2 \E\left[\nabla_f\right]^T \mathbf{\Sigma}\left(\mathbf{I}+ m_z\mathbf{\Sigma}\right)^{-1} \mathbf{\Sigma} \E\left[\nabla_f\right]  \\ & + m_z \textbf{var}_f\frac{1}{p} \tr \mathbf{\Sigma }\left(\mathbf{I}+ m_z\mathbf{\Sigma}\right)^{-1}\Biggr] 
+ O_p\left(n^{-\frac{1}{2}+\delta}\right). 
\end{align*}
With the above observation at hand, it is straightforward to show that 
\begin{align*}
\widehat{\mathcal{R}}_{test} = \mathcal{R}_{test}^{\infty} + O_p(n^{-\frac{1}{2}+\delta}),
\end{align*}
which is combined with Theorem \ref{theorem1} gives the claim of Theorem \ref{consistent_estim}.				
\bibliographystyle{IEEEtran}
\bibliography{References_zhang}



\end{document}